\documentclass[a4paper,runningheads,envcountsame]{llncs}
\usepackage[utf8]{inputenc}
\usepackage[T1]{fontenc}
\usepackage[english]{babel}

\usepackage{framed}
\usepackage{lmodern}
\usepackage{amssymb}
\usepackage{amsmath}
\usepackage{todonotes}
\setlength{\marginparwidth}{3.5cm}
\usepackage{hyperref}
\usepackage{doi}
\usepackage[vlined,ruled,noend,linesnumbered]{algorithm2e}
\SetArgSty{text}
\DontPrintSemicolon
\usepackage{bussproofs}
\EnableBpAbbreviations
\usepackage{tikz}
\usetikzlibrary{shapes.geometric, patterns}
\usetikzlibrary{arrows.meta}
\usetikzlibrary{calc}
\usetikzlibrary{shapes.misc, positioning}
\usepackage{csquotes}
\usepackage{cite,doi}
\usepackage{booktabs}
\usepackage{thmtools}
\usepackage{thm-restate}
\usepackage{enumitem}
\usepackage{microtype}
\usepackage{forest}
\usepackage{colortbl}
\usepackage{multirow}

\usepackage[hyphenbreaks]{breakurl} %

\usepackage{xcolor}
\usepackage{pgfplots}
\pgfplotsset{compat=1.16}%

\relpenalty10000
\binoppenalty10000

\usepackage{xparse}

\usepackage{stmaryrd}

\usepackage{slashbox}

\usepackage{cancel}

\newcommand{\ttodo}[4]{\ifthenelse{\equal{#1}{inline}}{\todo[inline, author=#2, color =
#3]{#4}}{\todo[color=#3]{#2: #4}}}

\newcommand{\wrt}{w.r.t.\ }
\newcommand{\st}{s.t.\ }
\newcommand{\ie}{i.e.\ }
\newcommand{\eg}{e.g.\ }
\newcommand{\cf}{cf.\ }

\newlength{\myl}
\newcommand{\longsquigarrow}[1]{
    \settowidth{\myl}{$~_{#1}$}
    \raisebox{-0.01cm}{\xymatrix@C=\myl{
            {}\ar@{~>}[r]^{~_{#1}}&{}
        }
    }
}

\newcommand{\OP}{\ensuremath{\mathsf{OP}}\xspace}

\newcommand{\un}{\ensuremath{\mathsf{unary}}\xspace}
\newcommand{\bin}{\ensuremath{\mathsf{binary}}\xspace}

\def\define#1#2#3%
{%
\renewcommand*{\do}[1]{%
\expandafter\newcommand\csname
#1\endcsname{#2}
}
\docsvlist{#3}
}

\define{#1}
{{\text{\upshape{\textsc{#1}}}}\xspace}
{NP,coNP,PSpace,ExpTime,ExpSpace,NLogTime,NExpTime,LogSpace}

\newcommand{\PTime}{\text{\upshape{\textsc{P}}}\xspace}

\newcommand{\card}[1]{\lvert #1\rvert}
\newcommand{\poly}{\ensuremath{\mathsf{polynomial}}\xspace}
\renewcommand{\exp}{\ensuremath{\mathsf{exponential}}\xspace}
\newcommand{\reasoner}{deriver\xspace}
\newcommand{\Reasoner}{Deriver}
\newcommand{\reasoners}{derivers\xspace}
\newcommand{\Reasoners}{Derivers}
\newcommand{\pis}{\ensuremath{\mathrm{\Phi}}\xspace}
\newcommand{\si}{\textbf{\upshape{[HI]}}\xspace}

\newcommand{\pol}{\textbf{\upshape{[P]}}\xspace}

\newcommand{\Lmc}{\ensuremath{\mathcal{L}}\xspace}

\newcommand{\Smc}{\ensuremath{\mathcal{S}}\xspace}
\newcommand{\Tmc}{\ensuremath{\mathcal{T}}\xspace}

\newcommand{\EL}{\ensuremath{\mathcal{E}\hspace{-0.1em}\mathcal{L}}\xspace}

\newcommand{\ELI}{\ensuremath{\mathcal{ELI}}\xspace}
\renewcommand{\L}{\ensuremath{\mathcal{L}}\xspace}

\newcommand{\Rationals}{\ensuremath{\mathbb{Q}}\xspace}

\newcommand{\tup}[1]{(#1)}

\newcommand{\NC}{\ensuremath{\textsf{N}_\textsf{C}}\xspace}
\newcommand{\NR}{\ensuremath{\textsf{N}_\textsf{R}}\xspace}

\newcommand{\exFont}[1]{\ensuremath{\mathsf{#1}}\xspace}

\newcommand{\p}{\ensuremath{\mathcal{P}}\xspace}
\newcommand{\q}{\ensuremath{\mathcal{Q}}\xspace}

\newcommand{\R}{\ensuremath{\mathfrak{D}}\xspace}

\newcommand{\el}{\ensuremath{\ell}\xspace}
\newcommand{\m}{\ensuremath{\mathfrak{m}}\xspace}

\newcommand{\ds}{\ensuremath{\mathcal{D}}\xspace}

\newcommand{\true}{\ensuremath{\mathsf{true}}\xspace}
\newcommand{\false}{\ensuremath{\mathsf{false}}\xspace}

\DeclareDocumentEnvironment{block}{O {} }{\noindent \textbf{#1} \quad }{~\\}

\newcommand{\leaf}{\ensuremath{\mathsf{leaf}}\xspace}
\newcommand{\edge}{\ensuremath{\mathsf{edge}}\xspace}

\newcommand{\blank}{\cancel{b}}

\newcommand{\mtree}{\ensuremath{\m_{\mathsf{tree}}}\xspace}

\newcommand{\mdepth}{\ensuremath{\m_{\mathsf{depth}}}\xspace}
\newcommand{\msize}{\ensuremath{\m_{\mathsf{size}}}\xspace}

\newcommand{\mldepth}{\ensuremath{\m_{\mathsf{log(depth)}}}\xspace}

\newcommand{\quant}{{\mathsf{Q}}}

\newcommand{\Elk}{\text{\upshape{\textsc{Elk}}}\xspace}
\newcommand{\Eli}{\text{\upshape{\textsc{Eli}}}\xspace}

\newcommand{\Protege}{\text{Prot\'eg\'e}\xspace}

\newif\ifhideproofs

\newif\iftechnicalReport

\newif\ifplotDataInTikz

\newif\ifappendix

\plotDataInTikztrue %
\hideproofsfalse
\technicalReporttrue %

\ifhideproofs
  \usepackage{environ}
  \NewEnviron{hide}{}

\fi

\iftechnicalReport
\title{Finding Good Proofs for Description Logic Entailments Using Recursive Quality Measures (Extended Technical Report)}
\else
\title{Finding Good Proofs for Description Logic Entailments Using Recursive Quality Measures}
\fi
\titlerunning{Good Proofs for DL Entailments}
\author{
  Christian~Alrabbaa\orcidID{0000-0002-2925-1765} \and 
  Franz~Baader\orcidID{0000-0002-4049-221X} \and
  Stefan~Borgwardt\orcidID{0000-0003-0924-8478} \and  
  Patrick~Koopmann\orcidID{0000-0001-5999-2583} \and 
  Alisa~Kovtunova\orcidID{0000-0001-9936-0943}}
\authorrunning{Alrabbaa, Baader, Borgwardt, Koopmann, Kovtunova}
\institute{Theoretical Computer Science, TU Dresden, Dresden, Germany}

\newcommand{\added}[1]{#1}%

\begin{document}

\maketitle

\begin{abstract}
Logic-based approaches to AI have the advantage that their behavior can in principle be explained to 
a user.
If, for instance, a Description Logic  reasoner derives a consequence that triggers some action of
the overall system, then one can explain such an entailment by presenting a proof of the consequence
in an appropriate calculus. How comprehensible such a proof is depends not only on the employed calculus,
but also on the properties of the particular proof, such as its overall size, its depth, the complexity
of the employed sentences and proof steps, etc. For this reason, we want to determine
the complexity of generating proofs that are below a certain threshold w.r.t.\ a given measure
of proof quality. Rather than investigating this problem for a fixed proof calculus and a fixed measure,
we aim for general results that hold for wide classes of calculi and measures. In previous work,
we first restricted the attention to a setting where proof size is used to measure the quality
of a proof. We then extended the approach to a more general setting, but important measures such as
proof depth were not covered. In the present paper, we provide results for a class of measures called
recursive, which yields lower complexities and also encompasses proof depth. In addition, we close some gaps  left open
in our previous work, thus providing a comprehensive picture of the complexity landscape.
\end{abstract}

\section{Introduction}

Explainability has developed into a major issue in Artificial Intelligence,
particularly in the context
of sub-symbolic approaches based on Machine Learning \cite{XAIpaper}. In contrast, results produced by
symbolic approaches based on logical reasoning are ``explainable by design'' since a derived consequence can
be formally justified by showing a proof for it. In practice, things are not that easy since proofs may be
very long, and even single proof steps or stated sentences may be hard to comprehend 
for a user that is not an expert
in logic. For this reason, there has been considerable work in the Automated Deduction and Logic in AI communities
on how to produce ``good'' proofs for certain purposes, both for full first-order logic, but also for
decidable logics such a Description Logics (DLs) \cite{baader_horrocks_lutz_sattler_2017}. We mention here only a few approaches, and refer the reader
to the introduction of our previous work \cite{LPAR23:Finding_Small_Proofs_for} for a more detailed review.

First, there is work that transforms proofs that are produced by an automated reasoning system into ones
in a calculus that is deemed to be more appropriate for human
consumption~\cite{DBLP:conf/ijcai/Lingenfelder89,DeMc-96,DBLP:conf/ecai/BorgidaFH00}.
Second, abstraction techniques are used to reduce the size of proofs
by introducing definitions, lemmas, and more abstract deduction rules~\cite{10.5555/648231.752805,DBLP:conf/semweb/HorridgePS10}.
Justification-based explanations for DLs \cite{ScCo03,BaaSun-KRMED-08,Horr-11} can be seen as a radical abstraction technique where the
abstracted proof consists of a single proof step, from a minimal set of stated sentences 
that implies a certain consequence directly to this
consequence.
Finally, instead of presenting proofs in a formal, logical syntax, one can also try to increase readability by
translating them into natural language
text~\cite{DBLP:conf/birthday/Fiedler05,DBLP:conf/dlog/SchillerG13,DBLP:conf/esws/NguyenPPW13,DBLP:conf/dlog/SchillerSG17}
or visualizing them \cite{AlBaDaFlKo-DL-20}.

The purpose of this work is of a more (complexity) theoretic nature. We want to investigate how hard it is to find
good proofs, where the quality of a proof is described by a measure \m that assigns non-negative rational numbers
to proofs. More precisely, as usual we investigate the complexity of the corresponding decision problem, i.e.,
the problem of deciding whether there is a proof \p with $\m(\p)\leq q$ for a given rational number $q$.
In order to abstract from specific logics and proof calculi, we develop a general framework in which proofs are represented
as labeled, directed hypergraphs, whose hyperedges correspond to single sound derivation steps. To separate the complexity
of generating good proofs from the complexity of reasoning in the underlying logic, we introduce the notion of
a \emph{deriver}, which generates a so-called \emph{derivation structure}. This structure consists of possible proof steps,
from which all proofs of the given consequence can be constructed. Basically, such a derivation structure can be
seen as consisting of all relevant instantiations of the rules of a calculus that can be used to derive the consequence. We restrict
the attention to decidable logics and consider derivers that produce derivation structures of polynomial or exponential size.
Examples of such derivers are consequence-based reasoners for the DLs
$\EL$~\cite{BaBL-IJCAI05,DBLP:journals/jar/KazakovKS14} and
$\ELI$ \cite{DBLP:conf/ijcai/Kazakov09,baader_horrocks_lutz_sattler_2017}, respectively.
In our complexity results, the derivation structure is assumed to be already computed by the deriver,\footnote{%
The highly efficient reasoner ELK~\cite{DBLP:journals/jar/KazakovKS14}
for (an extension of) \EL actually produces a derivation structure, and thus is a deriver in 
our sense.
}
i.e., the complexity of this step is not assumed to be part of the complexity of computing good proofs.
Our complexity results investigate the problem along the following orthogonal dimensions:
we distinguish between
(i)~polynomial and exponential derivers; and
(ii)~whether the threshold value $q$ is encoded in unary or binary.
The obtained complexity upper bounds hold for all instances of a considered setting, whereas the lower bounds
mean that there is an instance (usually based on $\EL$ or $\ELI$) for which this lower bound can be proved.

In our first work in this direction \cite{LPAR23:Finding_Small_Proofs_for}, we focused our attention
on \emph{size} as the measure of proof quality. We could show that the above decision problem
is NP-complete even for polynomial derivers and unary coding of numbers. For exponential derivers, the complexity depends
on the coding of numbers: NP-complete (NExpTime-complete) for unary (binary) coding. For the related measure \emph{tree size} (which assumes that the proof hypergraphs are tree-shaped, \ie cannot reuse already derived consequences), the
complexity turned out to be considerably lower, due to the fact that a Dijkstra-like greedy algorithm can be applied.
\added{
In~\cite{DBLP:conf/dlog/AlrabbaaBBKK20}, we generalized the results by introducing a class of measures called
\emph{$\Psi$-measures}, which contains both size and tree size and for which the same complexity upper bounds as for
size could be shown for polynomial derivers.
We also lifted the better upper bounds for tree size (for polynomial derivers) to \emph{local $\Psi$-measures}, a natural class of proof measures.
In this paper, we extend this line of research by providing a more 
general notion of measures, \emph{monotone recursive \pis-measures}, which now 
also allow to measure the \emph{depth} of a proof. We think that depth 
is an important measure since it measures how much of the proof tree a (human 
or automated) proof checker needs to keep in memory at the same time. We 
analyze these measures not only for polynomial derivers, but this time also consider 
exponential derivers, thus giving insights on how our complexity 
results transfer to more expressive logics. In 
addition to upper bounds for the general class of monotone recursive 
\pis-measures, we show improved bounds for the specific measures considering 
depth 
and tree size, in the latter case improving results 
from~\cite{LPAR23:Finding_Small_Proofs_for}.}
Overall, we thus obtain a comprehensive picture of the complexity landscape for 
the
problem of finding good proofs for DL and other entailments (see 
Table~\ref{table:complexity-results}).

\iftechnicalReport
This is an extended version of the paper \cite{ABB+-CADE21}, including an appendix with more detailed proofs and some auxiliary lemmas.
\else
An extended version of this paper with detailed proofs can be found at \cite{ABB+-CoRR21}.
\fi

\begin{table}[tb]

\newcommand{\cNewResult}{}%
\newcommand{\cOldResult}{\cellcolor{gray!30}}

\centering
\caption{Overview over existing and new complexity results for deciding the existence of good proofs, 
\wrt polynomial/exponential derivers and unary/binary encoding of the bound~$q$ (known results in gray).}
\label{table:complexity-results}
\begin{tabular}{|l|l|l|l|l|}
    \hline
    Measure
    & $ {\ }^\poly_\un$ &
    $\phantom{\OP}^\poly_\bin$
    &$\phantom{\OP}^\exp_\un$&$\phantom{\OP}^\exp_\bin$\\
    \hline
    \hline
        Size&
        \cOldResult$\NP$~\cite{LPAR23:Finding_Small_Proofs_for}&
        \cOldResult$\NP$~\cite{LPAR23:Finding_Small_Proofs_for}&
        \cOldResult$\NP$~\cite{LPAR23:Finding_Small_Proofs_for}&
        \cOldResult$\NExpTime$~\cite{LPAR23:Finding_Small_Proofs_for}\\
    \hline
    \hline
    Monotone recursive &
    \cNewResult & \cNewResult & \cNewResult & \cNewResult \\
    \pis-measures & 
    \cNewResult\multirow{-2}{*}{$\le \PTime$}&
    \cNewResult\multirow{-2}{*}{$\le \PTime$~\scriptsize{[Th.\ref{th:OP-general-upperBound}]}}&
    \cNewResult\multirow{-2}{*}{$\le \ExpTime$}&
    \cNewResult\multirow{-2}{*}{$\le \ExpTime$~\scriptsize{[Th.\ref{th:OP-general-upperBound}]}} \\
    \hline
    Tree size & 
    \cOldResult$\PTime$~\cite{LPAR23:Finding_Small_Proofs_for}& 
    \cOldResult$\PTime$&
    \cOldResult$\NP$~\cite{LPAR23:Finding_Small_Proofs_for}&
    \cNewResult$\PSpace$~\scriptsize{[Th.\ref{th:OP-tree-exp-binary-PSpace-mem},\ref{th:OP-tree-exp-binary-PSpace-hard}]}
      \\
    \hline
    Depth &
    \cNewResult$\PTime$~\scriptsize{[Th.\ref{th:poly-P-comp}]} & 
    \cNewResult$\PTime$&
    \cNewResult$\PSpace$~\scriptsize{[Th.\ref{th:OP-depth-exp-unary-PSpace}]} &
    \cNewResult$\ExpTime$~\scriptsize{[Th.\ref{th:poly-P-comp}]} \\
    \hline
    Logarithmic depth &
    \cNewResult$\PTime$~\scriptsize{[Cor.\ref{cor:log-depth}]} & 
    \cNewResult$\PTime$&
    \cNewResult$\ExpTime$~\scriptsize{[Cor.\ref{cor:log-depth}]} & 
    \cNewResult$\ExpTime$\\
    \hline
\end{tabular}
\end{table}

\section{Preliminaries}

Most of our theoretical discussion applies to arbitrary
\emph{logics}~$\Lmc=(\mathcal{S}_\Lmc,\models_\Lmc)$ that consist of a set
$\mathcal{S}_\Lmc$ of
\emph{$\Lmc$-sentences} and a \emph{consequence relation}
\mbox{${\models_\Lmc}\subseteq P(\mathcal{S}_\Lmc)\times \mathcal{S}_\Lmc$} between
\emph{\Lmc-theories}, \ie subsets of
\Lmc-sentences, and
single \Lmc-sentences.
We assume that $\models_\Lmc$ has a semantic definition, \ie for some definition of
\enquote{model}, $\Tmc\models_\Lmc\eta$ holds iff every model of all elements in~\Tmc is
also a model of~$\eta$.
We also assume that the \emph{size}~$|\eta|$ of an \L-sentence~$\eta$ is defined in
some way, \eg by the number of symbols in~$\eta$.
Since \L is usually fixed, we drop the prefix \enquote{\L-} from now on.
For example, \L could be \emph{first-order logic}.
However, we are mainly interested in proofs for DLs, which can be seen as decidable fragments of first-order logic~\cite{baader_horrocks_lutz_sattler_2017}.
In particular, we use specific DLs to show our hardness results.

The syntax of DLs is based on disjoint, countably infinite sets~\NC and~\NR of \emph{concept names}~$A,B,\dots$ and \emph{role names}~$r,s,\dots$, respectively.
Sentences of the DL~$\EL$, called \emph{general concept inclusions (GCIs)}, are of the form $C\sqsubseteq D$, where $C$ and $D$ are \emph{\EL-concepts}, which are built from concept names by applying the constructors $\top$~(\emph{top}), $C\sqcap D$ (\emph{conjunction}), and $\exists r.C$ (\emph{existential restriction} for a role name~$r$).
The DL~\ELI extends~\EL by the role constructor $r^-$ (\emph{inverse role}).
In DLs, finite theories are called \emph{TBoxes} or \emph{ontologies}.

The semantics of DLs is based on first-order interpretations; for details, see~\cite{baader_horrocks_lutz_sattler_2017}.
In Figure~\ref{fig:cr}, we depict a simplified version of the inference rules for \EL from~\cite{DBLP:journals/jar/KazakovKS14}.
For example, $\{A\sqsubseteq\exists r.B,\ B\sqsubseteq C,\ \exists r.C\sqsubseteq D\}\models A\sqsubseteq D$ is a valid inference in \EL.
Deciding consequences in \EL is \PTime-complete~\cite{BaBL-IJCAI05}, and in \ELI %
it is \ExpTime-complete~\cite{BaBL-OWLED08}.
\begin{figure}[tb]
  \input{el-calculus-side-condition}
  \caption{The inference rules for \EL used in \Elk~\cite{DBLP:journals/jar/KazakovKS14}.}
  \label{fig:cr}
\end{figure}

\subsection{Proofs}

We formalize proofs as (labeled, directed) \emph{hypergraphs}
(see Figures~\ref{fig:ex:proof-graph-2},~\ref{fig:ex:proof-tree-2}), which are tuples $(V,E,\ell)$ consisting of 
a finite set~$V$ of \emph{vertices}, a finite set~$E$ of \emph{(hyper)edges} of the form $(S,d)$ with $S\subseteq V$ and $d\in V$, and a
\emph{vertex labeling function} $\el\colon V\to \mathcal{S}_\L$.
Full definitions of such hypergraphs, as well as related notions such as \emph{trees}, \emph{unravelings}, \emph{homomorphisms}, \emph{cycles} can be found in the
\iftechnicalReport
appendix.
\else
extended version~\cite{ABB+-CoRR21}.
\fi
For example, there is a homomorphism from Figure~\ref{fig:ex:proof-tree-2} to Figure~\ref{fig:ex:proof-graph-2}, but not vice versa, and Figure~\ref{fig:ex:proof-tree-2} is the tree unraveling of Figure~\ref{fig:ex:proof-graph-2}.
\begin{figure}[htb]
    \begin{minipage}[b]{0.44\linewidth}
        \centering
        \begin{tikzpicture}[scale=0.7,
        block/.style={
            draw,
            rounded rectangle,
            minimum width={width("$C \sqsubseteq \exists r.(\exists r.A \sqcap B)$")-5pt}},
        he/.style={rounded corners=8pt,->}]
        \path (7,3) node[block, very thick] (x1) {$B \sqsubseteq \exists r.A$}
        (3,3) node[block, very thick] (x2) {$A \sqsubseteq B$}
        (5,2) node[block] (x3) {$A \sqsubseteq \exists r.A$}
        (3,1) node[block] (z) {$A \sqsubseteq B \sqcap \exists r.A$};
        \draw[he] (x1) -- ($(x1)!0.5!(x2)$) -- (x3);
        \draw[he] (x2) -- ($(x1)!0.5!(x2)$) -- (x3);
        \draw[he] (x3.west) -- (x3-|z) -- (z);
        \draw[he] (x2) -- (z);
        \end{tikzpicture}
        \caption{An acyclic hypergraph/proof}\label{fig:ex:proof-graph-2}
    \end{minipage}
    \hfill
    \begin{minipage}[b]{0.53\linewidth}
        \centering
        \begin{tikzpicture}[scale=0.7,
        block/.style={
            draw,
            rounded rectangle,
            minimum width={width("$C \sqsubseteq \exists r.(\exists r.A \sqcap B)$")-5pt}},
        he/.style={rounded corners=8pt,->}]
        \path (7,3) node[block, very thick] (x1) {$B \sqsubseteq \exists r.A$}
        (3,3) node[block, very thick] (x2) {$A \sqsubseteq B$}
        (1,2) node[block, very thick] (y2) {$A \sqsubseteq B$}
        (5,2) node[block] (x3) {$A \sqsubseteq \exists r.A$}
        (3,1) node[block] (z) {$A \sqsubseteq B \sqcap \exists r.A$};
        \draw[he] (x1) -- ($(x1)!0.5!(x2)$) -- (x3);
        \draw[he] (x2) -- ($(x1)!0.5!(x2)$) -- (x3);
        \draw[he] (x3.west) -- (x3-|z) -- (z);
        \draw[he] (y2) -- ($(y2)!0.5!(x3)$) -- (z);
        \end{tikzpicture}
        \caption{A tree hypergraph/proof}\label{fig:ex:proof-tree-2}
    \end{minipage}
\end{figure}

The following definition formalizes basic requirements for hyperedges to be considered valid 
inference steps from a given finite theory.

\begin{definition}[Derivation Structure]\label{def:derivation-structure}
  A \emph{derivation structure} $\ds = (V, E, \el)$ over a finite theory~\Tmc is a hypergraph that is
  \begin{itemize}
  \item \emph{grounded}, \ie
  every leaf $v$ in~\ds is labeled by $\el(v)\in\Tmc$; and
  \item \emph{sound}, \ie for every $(S,d)\in E$, the entailment
$\{\el(s)\mid s\in S\}\models\el(d)$ holds.
\end{itemize}
\end{definition}

We define proofs as special derivation structures that derive a conclusion.

\begin{definition}[Proof]\label{def:proof}
  Given a conclusion~$\eta$ and a finite theory~\Tmc, a \emph{proof for $\Tmc\models\eta$} is a 
  derivation structure $\p = (V, E,\el)$ over~\Tmc such that
  \begin{itemize}
    \item\label{item1:def-proof-non-redundancy} \p contains exactly one
        sink~$v_\eta\in V$, which is labeled by~$\eta$,
    \item \p is acyclic, and
    \item every vertex has at most one incoming edge, \ie there is no vertex $w\in V$ \st there are $(S_1,w),
            (S_2,w)\in E$ with $S_1\neq S_2$.
  \end{itemize}
A \emph{tree proof} is a proof that is a tree.
A \emph{subproof} $S$ of a hypergraph~$H$ is a subgraph of~$H$ that is a proof \st the leaves 
of $S$ are a subset of the leaves of~$H$.
\end{definition}

The hypergraphs in Figures~\ref{fig:ex:proof-graph-2} and~\ref{fig:ex:proof-tree-2} can be seen as proofs in the sense of Definition~\ref{def:proof}, where the sentences of the theory are marked 
with a thick border. Both proofs use the same inference steps, but have different numbers of vertices. They both prove $A\sqsubseteq
B\sqcap\exists r.A$ from $\Tmc=\{ A \sqsubseteq B,\
B \sqsubseteq \exists r.A \}$.
The second proof is a
tree and
the first one a hypergraph without label repetition.
\begin{restatable}{lemma}{finitePathsSinkConclusion}\label{lem:proof-properties}
    Let $\p=(V,E,\el)$ be a proof for~$\Tmc\models\eta$. Then
    \begin{enumerate}
        \item all paths in \p are finite and all longest paths in \p have $v_\eta$ as the target; and
        \item $\Tmc\models \eta$.
    \end{enumerate}
\end{restatable}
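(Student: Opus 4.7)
The plan is to prove the two claims essentially directly from the definitions, with part~(1) being a standard DAG argument and part~(2) an induction following the inference structure.

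For part~(1), I would first argue finiteness of paths. Since \p has finitely many vertices and is acyclic, no vertex can appear twice on a single path, so every path has length bounded by~$|V|$, hence is finite. For the claim about longest paths, I would argue contrapositively: suppose a path ends at a vertex~$w$ which is not a sink. Then $w$ has some outgoing edge $(S',d')\in E$ with $w\in S'$, so the path can be extended by one step to reach~$d'$, contradicting maximality. Hence the endpoint of any longest path is a sink, and since by Definition~\ref{def:proof} the vertex~$v_\eta$ is the \emph{unique} sink of~\p, every longest path targets~$v_\eta$.

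For part~(2), I would prove the stronger statement that $\Tmc\models\el(v)$ holds for every vertex $v\in V$, which then yields $\Tmc\models\eta$ by taking $v=v_\eta$. The proof proceeds by induction along a reverse topological order of~\p, which exists because \p is finite and acyclic. For the base case, if $v$ is a leaf (no incoming edge) then groundedness of the derivation structure (Definition~\ref{def:derivation-structure}) gives $\el(v)\in\Tmc$, so $\Tmc\models\el(v)$ trivially. For the inductive step, if $v$ has incoming edges, then by the third bullet of Definition~\ref{def:proof} there is exactly one such edge $(S,v)\in E$. By induction hypothesis, $\Tmc\models\el(s)$ for every $s\in S$, and by soundness of the derivation structure, $\{\el(s)\mid s\in S\}\models\el(v)$. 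Using the semantic characterization of $\models$ assumed in the preliminaries (any model of $\Tmc$ is a model of each $\el(s)$ and therefore of $\el(v)$), we conclude $\Tmc\models\el(v)$.

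The only subtlety is ensuring that a reverse topological order exists and that the induction is well-founded. This is immediate from finiteness and acyclicity of the underlying hypergraph, which is precisely what Definition~\ref{def:proof} guarantees. No step here is really an obstacle; the main care is to invoke the correct bullet of Definition~\ref{def:proof} (uniqueness of the incoming edge, used to make the inductive step unambiguous) and the semantic assumption on $\models_\L$ stated at the beginning of the preliminaries (used to chain entailments).
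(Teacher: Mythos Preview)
Your proposal is correct and follows essentially the same approach as the paper: part~(1) is the standard finite-acyclic argument bounding path length by~$|V|$ and observing that a maximal path must end in the unique sink, and part~(2) is a well-founded induction over the DAG (the paper phrases it as induction on the maximal path length to~$w$, you phrase it as reverse topological order) using groundedness for the base case and soundness plus the semantic definition of~$\models$ for the step. The only cosmetic difference is that you invoke uniqueness of the incoming edge explicitly, whereas the paper just picks ``an'' edge $(S,w)$; both are fine.
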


Given a proof $\p=(V, E,\el)$ and a vertex $v\in V$, the \emph{subproof of~$\p$ with sink~$v$} is the largest subgraph $\p_v=(V_v,E_v,\el_v)$ of~\p where
  $V_v$ contains all vertices in~$V$ that have a path to $v$ in $\p$.

\subsection{\Reasoners}
\label{sec:derivers}

In practice, proofs and derivation structures are constructed by a reasoning system, and in 
theoretical investigations, it is common to define proofs by means of a calculus. To abstract from 
these details, we use the concept of a \emph{deriver} as 
in~\cite{LPAR23:Finding_Small_Proofs_for}, which is a function that, given a theory~\Tmc and 
a conclusion~$\eta$, produces the corresponding derivation structure in which we can look for 
an optimal proof.  However, in practice, it would be inefficient and unnecessary to compute the 
entire derivation structure beforehand when looking for an optimal proof. 
Instead, we allow to access elements in a derivation structure using an oracle, which we can ask 
whether given inferences are a part of the current derivation structure. Similar functionality exists for 
example for the DL reasoner \Elk~\cite{ELK-TRACING}, and may correspond to checking whether 
the inference is an instance of a rule in the calculus. 
Since reasoners may not be complete for proving arbitrary sentences of~\L, we restrict the 
conclusion~$\eta$ to a
subset $C_\L\subseteq \mathcal{S}_\L$ of supported consequences.

\begin{definition}[\Reasoner{}]\label{def:poly-exp-ds}
A \emph{\reasoner}~\R is given by a set $C_\L\subseteq \mathcal{S}_\L$ and a function
that assigns derivation structures to pairs $(\Tmc,\eta)$ of finite theories~$\Tmc\subseteq
\mathcal{S}_\L$ and sentences~$\eta\in C_\L$, such that
$\Tmc\models\eta$ iff $\R(\Tmc,\eta)$ contains a proof for~$\Tmc\models\eta$.
A proof~\p for $\Tmc\models\eta$ is called \emph{admissible \wrt $\R(\Tmc,\eta)$} if there is a homomorphism $h\colon\p\to\R(\Tmc,\eta)$.
We call $\R$ a \emph{polynomial \reasoner} if there exists a polynomial~$p(x)$
such that the size of $\R(\Tmc,\eta)$ is bounded by $p(|\Tmc|+|\eta|)$. \emph{Exponential
\reasoner{}s} are defined similarly by the restriction $|\R(\Tmc,\eta)|\le 2^{p(|\Tmc|+|\eta|)}$.
\end{definition}
\Elk is an example of a polynomial deriver, that is, for a given \EL~theory~\Tmc and \EL sentence $\eta$, $\Elk(\Tmc,\eta)$ contains all allowed instances of the rules shown in Figure~\ref{fig:cr}. As an example for an exponential deriver we use \Eli, which uses the rules from Figure~\ref{fig:ELIcr} and is complete for \ELI theories and conclusions of the form $A\sqsubseteq B$, $A$, $B\in\NC$.
The oracle access for a deriver $\R$ works as follows. Let $\ds=(V,E,\el):=\R(\Tmc,\eta)$ and $V=\{v_1,\dots,v_m\}$. %
    \ds is accessed using the following two functions, 
    where $i,i_1,\dots,i_l$ are indices of vertices and $\alpha$ is a
    sentence:
    \begin{align*}
    [\ds](i_1,\dots,i_l,i) &:=
    \begin{cases}
    \true &\text{if $(\{v_{i_1},\dots,v_{i_{l}}\},v_i)\in E$,} \\
    \false &\text{otherwise;}
    \end{cases} \\
    [\ds](i,\alpha) &:=
    \begin{cases}
    \true & \text{if $\el(v_i)=\alpha$,} \\
    \false & \text{otherwise.}
    \end{cases}
    \end{align*}

\begin{figure}[t]
\input{eli-calculus}
\caption{The inference rules for \ELI~\cite{baader_horrocks_lutz_sattler_2017}. Given a finite 
theory $\Tmc$ in a certain normal form, the rules produce a saturated theory~$\Tmc'$. Here, $K,L,M$ are 
conjunctions of concept names, $A$ is a concept name, $C$ is an \ELI concept of the form $A$, 
$\exists r.M$, or $\forall r. A$, and $r$ is a role name or the inverse of a role name.
  In this calculus conjunctions are implicitly viewed as sets, \ie the order and multiplicity of conjuncts is ignored.}\label{fig:ELIcr}
\end{figure}

In this paper, we focus on polynomial and exponential derivers, for which we further make the following technical assumptions: 1) $\R(\Tmc,\eta)$ does not contain two vertices with the same label; 2) the number of premises in an inference is polynomially bounded by $|\Tmc|$ and $|\eta|$; and 3) the size of each label is polynomially bounded by $|\Tmc|$ and $|\eta|$. While~1) is without loss of generality, 2) and~3) are not. If a deriver does not satisfy~2), we may be able to fix this by splitting inference steps. Assumption~3) would not work for derivers with higher complexity, but is required in our setting to avoid trivial complexity results for exponential derivers. We furthermore assume that for polynomial and exponential derivers, the polynomial $p$ from Definition~\ref{def:poly-exp-ds} bounding the size of derivation structures is known.

\section{Measuring Proofs}
\label{sec:measures}

To formally study quality measures for proofs, we developed the following definition, which will be instantiated with concrete measures later.
Our goal is to find proofs that minimize these measures, \ie lower numbers are better.

\begin{definition}[\pis-Measure]\label{def:measure}
	A \emph{(quality) measure} is a function $\m\colon \mathrm{P}_{\L} \rightarrow
	\Rationals_{\ge 0}$, where $\mathrm{P}_{\L}$ is the set of all proofs over~\L and
	$\Rationals_{\ge 0}$ is the set of non-negative rational numbers.
	We call \m a \emph{\pis-measure} if, for every $\p\in \mathrm{P}_{\L}$, the following hold.
	\begin{enumerate}[leftmargin=2.2em]
		\item[\pol] $\m(\p)$ is computable in \textbf{\underline{p}}olynomial time in the size
		of~$\p$.
		\item[\si] Let $h\colon \p\to H$ be any homomorphism, and $\p'$ be any
		subproof of the \textbf{\underline{h}}omomorphic \textbf{\underline{i}}mage~$h(\p)$ that is minimal (\wrt \m) among all such subproofs having the same sink. Then
		$\m(\p')\le\m(\p)$.
	\end{enumerate}
\end{definition}
Intuitively, a \pis-measure~\m does not increase when the proof gets smaller, either when parts
of the proof are removed (to obtain a subproof) or when parts are merged (in a homomorphic
image).
For example, $\msize((V,E,\ell)):=|V|$ is a \pis-measure, called the \emph{size} of a proof, and we have already investigated the complexity of the following deicision problem for~$\msize$ in~\cite{LPAR23:Finding_Small_Proofs_for}.
\begin{definition}[Optimal Proof]
	\label{def:best}
	Let \R be a \reasoner and \m be a measure.
	Given a finite theory~\Tmc and a sentence~$\eta\in C_\L$ \st $\Tmc \models \eta$, an
	admissible proof \p \wrt
	$\R(\Tmc,\eta)$ is called \emph{optimal}
	\wrt \m if $\m(\p)$ is minimal among all such proofs.
	The associated decision problem, denoted $\OP(\R,\m)$, is to decide, given \Tmc
	and~$\eta$ as above and $q\in\Rationals_{\ge 0}$, whether there is an admissible proof~\p \wrt
	$\R(\Tmc,\eta)$ with $\m(\p)\le q$.
\end{definition}
For our complexity analysis, we distinguish the encoding of~$q$ with a subscript (\un/\bin), \eg $\OP_\un(\R,\m)$.

We first show that if \p is optimal \wrt a \pis-measure~\m and $\R(\Tmc,\eta)$,
then the homomorphic image of~\p in $\R(\Tmc,\eta)$ is also a proof.
Thus, to decide $\OP(\R,\m)$ we can restrict our search to proofs that are subgraphs of
$\R(\Tmc,\eta)$.
\begin{restatable}{lemma}{LemAdmHyperproofInside}\label{lem:adm-hyperproof-inside}
	For any deriver~\R and \pis-measure \m, if there is an admissible proof \p \wrt $\R(\Tmc,\eta)$ with
	$\m(\p)\le q$ for some $q\in \Rationals_{\ge 0}$, then there exists a subproof $\q$ of
	$\R(\Tmc,\eta)$ for $\Tmc\models\eta$ with $\m(\q)\le q$.
\end{restatable}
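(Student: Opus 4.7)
The plan is to leverage the homomorphism invariance condition \si directly on the given admissible proof.

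First, by admissibility there is a homomorphism $h\colon\p\to\R(\Tmc,\eta)$. I would consider the homomorphic image $h(\p)$, which by construction is a subgraph of $\R(\Tmc,\eta)$ and contains the vertex $h(v_\eta)$ labeled by $\eta$. The goal is to exhibit some subproof of $h(\p)$ with sink $h(v_\eta)$, so that \si can be invoked on a minimal such subproof.

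The key step is showing that $h(\p)$ does contain a subproof with sink $h(v_\eta)$. Since homomorphisms may identify vertices and thus introduce cycles or multiple incoming edges in the image, $h(\p)$ itself need not be a proof, so one has to extract a proof from it. I would do this by a greedy back-traversal guided by the acyclic structure of $\p$: fix a topological order of $\p$, and for each vertex $w\in h(V)$, let $\pi(w)$ be the $h$-preimage of $w$ that is smallest in this order. Define $\q$ by starting at $h(v_\eta)$; for any currently included vertex $w$ whose preimage $\pi(w)$ is not a leaf of $\p$ (hence has an incoming edge $(S,\pi(w))$ in $\p$), add the image edge $(h(S),w)$ and all vertices $h(S)$. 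Because the preimages of vertices in $h(S)$ are strictly smaller in the topological order than $\pi(w)$, repeated back-traversal yields an acyclic subgraph; every vertex gets exactly one incoming edge by construction, leaves are labeled by elements of $\Tmc$ (since leaves of $\p$ are), and soundness transfers from $\p$. Thus $\q$ is a subproof of $h(\p)$ with sink $h(v_\eta)$.

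Now I apply \si: choose $\p'$ to be a subproof of $h(\p)$ with sink $h(v_\eta)$ that is minimal \wrt $\m$ among all such subproofs (existence of some subproof with this sink was just established). Then \si gives $\m(\p')\le\m(\p)\le q$. Since $\p'$ is a subproof of $h(\p)$ and $h(\p)$ is a subgraph of $\R(\Tmc,\eta)$, the hypergraph $\p'$ is also a subproof of $\R(\Tmc,\eta)$ for $\Tmc\models\eta$, which is the desired $\q$.

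The only non-routine step is the construction of a subproof inside $h(\p)$ sharing the sink $h(v_\eta)$; the topological-order choice of preimages is essential to rule out cycles created by identifications under $h$. Everything else is a direct application of the definitions and of \si.
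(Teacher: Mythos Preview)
Your proof is correct and reaches the same conclusion as the paper, but the key step---extracting a proof from the homomorphic image $h(\p)$---is carried out differently.

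The paper works in the \emph{preimage}: whenever $h(\p)$ has a cycle or a vertex with two incoming edges, this must come from two vertices $v,v'$ in~\p with $h(v)=h(v')$; the paper then replaces the subproof~$\p_{v'}$ by~$\p_v$ (merging $v'$ into $v$), and iterates until $h$ of the resulting $\p^*$ is itself a proof. You instead work directly in the \emph{image}: you fix a topological order on~\p, assign to each $w\in h(V)$ its least preimage~$\pi(w)$, and pull back exactly one incoming edge per vertex via~$\pi$. The crucial observation that $\pi(h(s))\le s<\pi(w)$ for every $s$ in the chosen source set is what guarantees acyclicity, and it is a clean replacement for the paper's iterative merging argument. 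Both approaches then finish identically by invoking~\si on a minimal subproof of $h(\p)$ with sink~$h(v_\eta)$.

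Your route is more direct and avoids the induction over merging steps; the paper's route is more explicit about \emph{why} $h(\p)$ can fail to be a proof (identifications creating cycles or multiple incoming edges) and how each defect is repaired. One small point worth noting: when you assert that your constructed~$\q$ is a \emph{subproof} of $h(\p)$ in the strict sense of Definition~\ref{def:proof} (leaves of~$\q$ are leaves of $h(\p)$), this requires that whenever the least preimage $\pi(w)$ is a leaf of~\p, \emph{every} preimage of~$w$ is a leaf of~\p---which need not hold in general. The paper's argument has the analogous looseness when it calls $h(\p^*)$ a subproof of $\R(\Tmc,\eta)$, so this is not a defect of your approach relative to the paper; but if you want to be fully precise, you can observe that such a leaf~$w$ is labeled by an element of~$\Tmc$, and that suffices for the final conclusion regardless of whether $w$ is a leaf of $h(\p)$ or of $\R(\Tmc,\eta)$.
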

In particular, this shows that an optimal proof always exists.%

\begin{restatable}{corollary}{CorExistence}\label{cor:existence}
	For any \reasoner~\R and \pis-measure~\m, if $\Tmc\models\eta$, then there is an
	optimal proof
	for $\Tmc\models\eta$ \wrt \R and~\m.
\end{restatable}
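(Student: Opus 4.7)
The plan is to reduce the existence of an optimal proof to a finite minimization problem by using Lemma~\ref{lem:adm-hyperproof-inside}. The difficulty is that there may be infinitely many admissible proofs \wrt $\R(\Tmc,\eta)$ (since we can always unravel, duplicate, or otherwise blow up a proof while still having a homomorphism into $\R(\Tmc,\eta)$), so one cannot simply pick a minimum from the set of all admissible proofs directly. The key observation is that $\R(\Tmc,\eta)$ is a finite hypergraph by Definition~\ref{def:poly-exp-ds}, and therefore has only finitely many subgraphs, and in particular only finitely many subproofs for $\Tmc\models\eta$.

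First, I would use the fact that $\Tmc\models\eta$ implies, by the definition of a \reasoner, that $\R(\Tmc,\eta)$ contains a proof for $\Tmc\models\eta$; this proof is trivially admissible via the identity homomorphism, so at least one admissible proof exists. Then I would invoke Lemma~\ref{lem:adm-hyperproof-inside} with $q := \m(\p)$ for this witness~\p to conclude that the finite set
\[
\mathcal{F} := \{\q \mid \q \text{ is a subproof of } \R(\Tmc,\eta) \text{ for } \Tmc\models\eta\}
\]
is nonempty. Since $\mathcal{F}$ is finite and $\m$ takes values in $\Rationals_{\ge 0}$, the minimum $\m^* := \min\{\m(\q)\mid \q\in\mathcal{F}\}$ is attained by some $\q^*\in\mathcal{F}$.

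Finally, I would verify that $\q^*$ is optimal among all admissible proofs. Every $\q\in\mathcal{F}$ is itself admissible (the inclusion into $\R(\Tmc,\eta)$ serves as the required homomorphism), so $\q^*$ is admissible. For an arbitrary admissible proof \p, Lemma~\ref{lem:adm-hyperproof-inside} applied with $q := \m(\p)$ gives some $\q\in\mathcal{F}$ with $\m(\q)\le\m(\p)$, and by minimality $\m(\q^*)\le\m(\q)\le\m(\p)$. Hence $\q^*$ realizes the minimum over all admissible proofs, which is exactly the definition of an optimal proof. The only substantive ingredient is Lemma~\ref{lem:adm-hyperproof-inside}; the rest is bookkeeping around finiteness of~$\R(\Tmc,\eta)$.
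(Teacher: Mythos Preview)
Your proof is correct and follows essentially the same approach as the paper: both arguments use the finiteness of $\R(\Tmc,\eta)$ to obtain a finite set of candidate subproofs, take a minimum-weight element, and then appeal to Lemma~\ref{lem:adm-hyperproof-inside} to rule out any admissible proof of smaller weight. Your write-up is slightly more explicit (e.g., spelling out that subproofs are admissible via the inclusion homomorphism), but the structure and key ingredient are identical.
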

\begin{proof}
	By Definition~\ref{def:poly-exp-ds}, the derivation structure $\R(\Tmc,\eta)$ contains at
	least one proof for~$\Tmc\models\eta$. Since $\R(\Tmc,\eta)$ is finite, there are finitely
	many proofs for $\Tmc\models\eta$ contained in $\R(\Tmc,\eta)$. The finite set of all
	\m-weights of these proofs always has a minimum. Finally, if there were an admissible proof
	weighing less than this minimum, it would contradict
	Lemma~\ref{lem:adm-hyperproof-inside}.
	\qed
\end{proof}

\subsection{Monotone Recursive Measures}\label{sec:trees}

Since the complexity of $\OP(\R,\m)$ for \pis-measures in general is quite high~\cite{LPAR23:Finding_Small_Proofs_for}, in this paper we focus on a subclass of measures that can be evaluated recursively.

\begin{definition}
\label{def:recursive}
  A \pis-measure~\m is \emph{recursive} if there exist
  \begin{itemize}
    \item a \emph{leaf function} $\leaf_\m\colon\mathcal{S}_\Lmc\to\mathbb{Q}_{\ge 0}$ and
    \item a partial \emph{edge function~$\edge_\m$}, which maps (i)~the labels $(\Smc,\alpha)$ of a hyperedge and (ii)~a finite multiset~\q of already computed intermediate weights in~$\mathbb{Q}_{\ge 0}$ to a combined weight $\edge_\m\big((\Smc,\alpha),\q\big)$
  \end{itemize}
    such that, for any proof $\p=(V,E,\ell)$ with sink~$v$, we have
    \[
    \m(\p)=\begin{cases}
        \leaf_\m(\ell(v))
        & \text{if $V=\{v\}$,}\\
        \edge_\m\big(\ell(S,v),\{\m(\p_w)\mid w\in S\}\big)
        & \text{if $(S,v)\in E$.}
    \end{cases}
    \]

    Such a measure is \emph{monotone} if, for any multiset $\q$, whenever $q\in\q$ and $\q'=(\q\setminus\{q\})\cup\{q'\}$ with $q'\le q$ and both $\edge_\m\big((\Smc,\alpha),\q'\big)$ and $\edge_\m\big((\Smc,\alpha),\q\big)$ are defined, then $\edge_\m\big((\Smc,\alpha),\q'\big)\le\edge_\m\big((\Smc,\alpha),\q\big)$.
\end{definition}
Intuitively, a recursive measure~\m can be computed in a bottom-up fashion starting with the 
weights of the leaves given by~$\leaf_\m$. The function~$\edge_\m$ is used to recursively 
combine the weights of the direct subproofs into a weight for the full proof.
This function is well-defined since in a proof every vertex has at most one incoming edge.
We require~$\edge_\m$ to be defined only for inputs $\big((\Smc,\alpha),\q\big)$ that actually correspond to a valid proof in~\L, \ie where $\Smc\models_\L\alpha$ and \q consists of the weights of some proofs for the sentences in~\Smc.
For example, if~\m always yields natural numbers, we obviously do not need $\edge_\m$ to be defined for multisets containing fractional numbers.

In this paper, we are particularly interested in the following monotone recursive \pis-measures.
\begin{itemize}
    \item The \emph{depth} \mdepth of a proof is defined by
    \[
    \leaf_{\mdepth}(\alpha):=0 \text{ and }
    \edge_{\mdepth}\big((\Smc,\alpha),\q\big):=1+\max\q.
    \]
    \item The \emph{tree size} \mtree is given by
    \[
    \leaf_{\mtree}(\alpha):=1 \text{ and }
    \edge_{\mtree}\big((\Smc,\alpha),\q\big):=1+\sum\q.
    \]
\end{itemize}

What distinguishes \emph{tree size} from \emph{size} is that vertices are counted multiple times if they are used in several subproofs. The name \emph{tree size} is inspired by the fact that it can be interpreted as the \emph{size} of the tree unraveling of a given proof (\cf Figures~\ref{fig:ex:proof-graph-2} and~\ref{fig:ex:proof-tree-2}).
In fact, we show in the
\iftechnicalReport
appendix
\else
extended version~\cite{ABB+-CoRR21}
\fi
that all recursive \pis-measures are invariant under unraveling.
This indicates that \emph{tree size}, \emph{depth} and other monotone recursive \pis-measures are especially well-suited
for cases where proofs are presented to users in the form of trees. This is for example the case for
the proof plugin for \Protege~\cite{KaKS-DL17}.

\begin{restatable}{lemma}{LemTableIsPSI}\label{lem:table-psi}
    \emph{Depth} and \emph{tree size} are monotone recursive \pis-measures.
\end{restatable}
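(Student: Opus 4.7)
The plan is to verify, for both $\mdepth$ and $\mtree$, the four conditions of being a monotone recursive $\pis$-measure. Recursiveness is immediate once one notes that in a proof every non-leaf vertex has exactly one incoming edge (Definition~\ref{def:proof}), so the clauses of Definition~\ref{def:recursive} determine $\m(\p)$ unambiguously; a straightforward structural induction then checks that the given formulas compute the standard notions of depth (length of the longest leaf-to-sink path) and tree size (vertex count of the tree unraveling). Monotonicity is equally immediate, since replacing some element $q\in\q$ by a smaller $q'\le q$ cannot increase $\max\q$ or $\sum\q$.

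Property \pol is obtained by dynamic programming along a reverse topological order of $\p$, computing $\m(\p_v)$ at every vertex from the values already computed at its premises. For $\mdepth$ the values lie in $\{0,\dots,|V|\}$; for $\mtree$ they may be exponential in $|V|$ but still have polynomially many bits, so all additions along the DAG remain polynomial in the size of $\p$.

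The main obstacle is property \si. Given any homomorphism $h\colon\p\to H$, I would construct a subproof $\p''$ of $h(\p)$ with sink $h(v_\eta)$ satisfying $\m(\p'')\le\m(\p)$; the $\m$-minimal subproof $\p'$ of $h(\p)$ sharing this sink then inherits the bound. The construction proceeds by induction on~$\p$. The base case $\p=\{v_\eta\}$ is trivial because $h$ preserves labels and groundedness. In the inductive step, with the final edge $(S,v_\eta)\in E_\p$, the induction hypothesis applied to each $\p_w$ ($w\in S$) together with the restriction of~$h$ yields subproofs $\p''_w$ of $h(\p_w)$ with sink $h(w)$ and $\m(\p''_w)\le\m(\p_w)$. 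The candidate $\p''$ is then assembled from the new edge $(h(S),h(v_\eta))$ together with one of the $\p''_w$ rooted at each $u\in h(S)$, chosen among those $w$ with $h(w)=u$.

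The delicate point is that this naive combination need not be a proof: $h$ may identify distinct premises in~$S$, different $\p''_w$ may share internal vertices with incompatible incoming edges, and $h(\p)$ itself may contain cycles. These conflicts can be resolved by processing vertices in a topological order inherited from $\p$, at each vertex keeping at most one incoming edge and discarding any edge whose inclusion would close a cycle; the result stays grounded as a subgraph of $h(\p)$. For depth we then obtain $\mdepth(\p'')=1+\max_{u\in h(S)}\mdepth(\p''_u)\le 1+\max_{w\in S}\mdepth(\p_w)=\mdepth(\p)$, using that $h(S)$ is no larger than $S$ and that each $\p''_u$ is a sub-structure of some chosen $\p''_w$ (whose depth can only decrease under sub-structuring). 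For tree size, selecting for each $u\in h(S)$ a single preimage $w\in S$ and using non-negativity of all weights yields $\mtree(\p'')=1+\sum_{u\in h(S)}\mtree(\p''_u)\le 1+\sum_{w\in S}\mtree(\p_w)=\mtree(\p)$. Monotonicity of $\edge_\m$ is precisely what lifts these estimates through the recursion to the root, completing the verification.
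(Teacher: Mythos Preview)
Your treatment of recursiveness, monotonicity and \pol is fine and matches the paper's one-line dismissal of these points. The substantive divergence is in how you handle \si.

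The paper does not build a subproof of $h(\p)$ by an induction on~$\p$. Instead it invokes the construction already developed in the proof of Lemma~\ref{lem:adm-hyperproof-inside}: starting from $\p_w$ (for an \emph{arbitrary} vertex $w$ of~$\p$), one repeatedly replaces a subproof~$\p_{v'}$ by~$\p_v$ whenever $h(v)=h(v')$ and $\p_v\subset\p_{v'}$, obtaining $\p^*_w$ on which~$h$ is injective. Hence $h(\p^*_w)\cong\p^*_w$, so the two have the same tree size (resp.\ depth), and each replacement step visibly does not increase either measure. This immediately gives a proof inside $h(\p)$ with sink $h(w)$ of measure at most $\m(\p_w)\le\m(\p)$, and since every vertex of $h(\p)$ arises as some $h(w)$, \si follows for all sinks.

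Your inductive assembly, by contrast, has a real gap at precisely the ``delicate point'' you flag. After you resolve conflicts by ``keeping at most one incoming edge'' at shared vertices, the object $\p''_u$ (the part of the assembled~$\p''$ below $u\in h(S)$) is \emph{not} a substructure of the originally chosen~$\p''_w$: it may splice in pieces coming from a different~$\p''_{w'}$ at a shared internal vertex. Concretely, if $a_1$ is a leaf of~$\p$ and $a_2$ is an internal vertex with $h(a_1)=h(a_2)$, then $h(a_1)$ is not a leaf of~$h(\p)$, and any subproof of~$h(\p)$ through $h(a_1)$ must continue below it; the depth (or tree size) below $h(w_1)$ can strictly exceed that of your chosen $\p''_{w_1}$. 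So the inequalities $\mdepth(\p''_u)\le\mdepth(\p''_w)$ and $\mtree(\p''_u)\le\mtree(\p''_w)$ you use in the last paragraph are not justified, and your ``topological order inherited from~$\p$'' does not resolve this because $h(\p)$ need not inherit any acyclic order. A second, smaller issue: \si must hold for the minimal subproof of $h(\p)$ with \emph{any} sink, not only $h(v_\eta)$; you would need to run your argument on each~$\p_w$ and then use $\m(\p_w)\le\m(\p)$, which you do not state. The cleanest fix is the paper's: reuse Lemma~\ref{lem:adm-hyperproof-inside} rather than re-deriving a weaker variant by hand.
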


\section{Complexity Results}

We investigate the decision problem \OP for monotone recursive \pis-measures. We first show upper bounds for the general case, and then consider measures for \emph{depth} and \emph{tree size}, for which we obtain even lower bounds. An artificial modification of the \emph{depth} measure gives a lower bound matching the general upper bound even if unary encoding is used for the threshold~$q$.

\subsection{The General Case}

\begin{algorithm}[tb]
    \KwIn{A derivation structure~$\R(\Tmc,\eta)=(V,E,\ell)$, a monotone recursive \pis-measure~\m}
    \KwOut{An optimal proof of $\Tmc\models\eta$ \wrt $\R(\Tmc,\eta)$ and~\m}

    \BlankLine
    $Q:=\emptyset$\;
    \lForEach{$e\in E$}{$k(e):=0$}
    \ForEach{$v\in V$}{
        \If{$\ell(v)\in\Tmc$}{
            $\p(v):=(\{v\},\emptyset,\ell|_{\{v\}})$; $Q:=Q\cup\{v\}$
            \label{l:tbox}
            \tcp*[r]{$\ell(v)$ is in the theory}
        }
        \ElseIf{$(\emptyset,v)\in E$}{
            $\p(v):=(\{v\},\{(\emptyset,v)\},\ell|_{\{v\}})$; $Q:=Q\cup\{v\}$
            \label{l:tautology}
            \tcp*[r]{$\ell(v)$ is a tautology}
        }
        \Else{$\p(v):=\text{undefined}$}
    }
    \While{$Q\neq\emptyset$}{
        choose $v\in Q$ with minimal $\m(\p(v))$
        \label{l:choose}
        \tcp*[r]{$\p(v)$ is optimal for~$\ell(v)$}
        $Q:=Q\setminus\{v\}$\; \label{l:remove-from-q}
        \ForEach{$e=(S,d)\in E$ with $v\in S$}{
            $k(e):=k(e)+1$\;
            \If(\tcp*[f]{all source vertices have been reached}){$k(e)=|S|$\label{l:counter}}{
                $\p:=(S\cup\{d\},e,\ell_{S\cup\{d\}})\cup\bigcup_{s\in S}\p(s)$ \label{l:new-proof}
                \tcp*[r]{construct new proof}
                \If{$\p$ is acyclic\label{l:acyc}}{
                    \If{$\p(d)$ is undefined or $\m(\p(d))>\m(\p)$\label{l:better}}{
                        $\p(d):=\p$; $Q:=Q\cup\{d\}$ \label{l:update}
                        \tcp*[r]{$\p$ is better for~$\ell(d)$}
                    }
                }
            }
        }
    }
    \KwRet{$\p(v_\eta)$, where $\ell(v_\eta)=\eta$}\label{l:return}
    \caption{A Dijkstra-like algorithm\label{alg:dijkstra}}
\end{algorithm}
Algorithm~\ref{alg:dijkstra} describes a Dijkstra-like approach that is inspired by the algorithm
in~\cite{DBLP:journals/dam/GalloLP93} for finding minimal hyperpaths \wrt so-called
\emph{additive weighting functions}, which represent a subclass of monotone recursive \pis-measures.
The algorithm progressively discovers proofs $\p(v)$ for~$\ell(v)$ that are contained in $\R(\Tmc,\eta)$.
If it reaches a new vertex~$v$ in this process, this vertex is added to the set~$Q$.
In each step, a vertex with minimal weight $\m(\p(v))$ is chosen and removed from~$Q$.
For each hyperedge~$e=(S,d)\in E$, a counter~$k(e)$ is maintained that is increased whenever a vertex $v\in S$ is chosen.
Once this counter reaches $|S|$, we know that all source vertices of~$e$ have been processed.
The algorithm then constructs a new proof~\p for~$\ell(d)$ by joining the proofs for the source vertices using the current hyperedge~$e$.
This proof~\p is then compared to the best previously known proof~$\p(d)$ for~$\ell(d)$ and $\p(d)$ is updated accordingly.
For Line~\ref{l:return}, recall that we assumed $\R(\Tmc,\eta)$ to contain no two vertices with
the same label, and hence it contains a unique vertex~$v_\eta$ with label~$\eta$.

\begin{restatable}{lemma}{LemDijkstra}\label{lem:dijkstra}
    For any monotone recursive \pis-measure~\m and \reasoner~\R, Algorithm~\ref{alg:dijkstra}
    computes an optimal proof in time polynomial in the size of~$\R(\Tmc,\eta)$.
\end{restatable}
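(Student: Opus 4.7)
The plan is to establish polynomial running time first, and then optimality of the returned proof.

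For the runtime, the key observation is that each hyperedge $e=(S,d)\in E$ triggers the proof construction in Line~\ref{l:new-proof} at most once, since the counter $k(e)$ only increases and the guard $k(e)=|S|$ succeeds exactly the first time this threshold is reached. Consequently, $\p(d)$ is updated at most once per incoming hyperedge of~$d$, so the total number of additions to~$Q$ is bounded by $|V|+|E|$ (the initial TBox/tautology additions plus one update per triggered edge). Each iteration of the while loop scans the outgoing edges of the chosen vertex, increments counters, computes~$\m$ on a subproof of $\R(\Tmc,\eta)$ in polynomial time by~\pol, checks acyclicity, and compares weights, giving total runtime polynomial in $|\R(\Tmc,\eta)|$.

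For correctness, I would prove the invariant that whenever a vertex~$v$ is chosen in Line~\ref{l:choose}, $\p(v)$ is already an optimal admissible proof for $\Tmc\models\ell(v)$ \wrt $\R(\Tmc,\eta)$. This suffices, since the algorithm returns $\p(v_\eta)$. Proceed by induction on the order of selection, and suppose for contradiction that $v$ is the first vertex violating the invariant. By Lemma~\ref{lem:adm-hyperproof-inside}, there is an optimal proof $\p^\star$ for $\ell(v)$ that is a subproof of $\R(\Tmc,\eta)$, with $\m(\p^\star)<\m(\p(v))$.

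The main obstacle is that some vertices of~$\p^\star$ may still be unchosen when $v$ is picked, so the algorithm cannot directly reconstruct $\p^\star$. The plan is to locate a vertex $u$ in~$\p^\star$ that is itself unchosen, but all sources of the last edge in some minimal-weight subproof of $\p^\star$ with sink~$u$ have already been chosen. Starting from~$v$, at each step take a minimal-weight subproof of $\p^\star$ with the current vertex as sink (which by~\si has weight at most $\m(\p^\star)$); if a source of its last edge is still unchosen, descend into that source, and otherwise terminate. Acyclicity of~$\p^\star$ forces termination, and the descent cannot stop at a leaf of~$\p^\star$ because leaves are labelled by TBox sentences and hence in the initial~$Q$. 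Let $u$ and $\pi$ denote the vertex and subproof at which the descent terminates, and $e_u=(T_u,u)$ the last edge of~$\pi$. By the inductive hypothesis, $\p(t)$ is optimal for each $t\in T_u$, hence of weight at most that of the subproof of~$\pi$ with sink~$t$; iterating monotonicity of~$\m$ over $T_u$ then yields that the proof constructed in Line~\ref{l:new-proof} when $e_u$ triggered has weight at most $\m(\pi)\le\m(\p^\star)<\m(\p(v))$. If $u=v$, this already contradicts $\m(\p^\star)<\m(\p(v))$; otherwise $u\in Q$ with $\m(\p(u))<\m(\p(v))$, contradicting the minimality of $\m(\p(v))$ in~$Q$.
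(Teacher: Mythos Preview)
Your overall strategy is sound but differs from the paper's. The paper establishes five separate properties~(I)--(V): soundness of each~$\p(v)$, monotonicity of weights along the extraction order, polynomial termination, completeness (every provable vertex is eventually enqueued), and finally optimality by a single \emph{global minimal-counterexample} argument: among all vertices~$v'$ with $\m(\p(v'))$ strictly above optimum, pick one whose optimal proof~$\p$ has smallest~$\m(\p)$ and then smallest~$|\p|$, and show directly that every source of the last edge of~$\p$ is processed before~$v'$. Your invariant-plus-descent argument is the more textbook Dijkstra proof; it is more compact and avoids the auxiliary tie-breaking on~$|\p|$, at the cost of needing the descent to locate a suitable frontier vertex~$u$.

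There is, however, a genuine gap. When $e_u=(T_u,u)$ fires, the hypergraph built in Line~\ref{l:new-proof} only affects~$\p(u)$ if it passes the acyclicity test in Line~\ref{l:acyc}; you never argue that it does. The union $\{e_u\}\cup\bigcup_{t\in T_u}\p(t)$ is cyclic precisely when $u$ already occurs in some~$\p(t)$, and nothing you have said rules this out. The fix is to strengthen your induction hypothesis: in addition to optimality, show that whenever a vertex~$w$ is chosen, every vertex of~$\p(w)$ is either~$w$ itself or was chosen strictly earlier. This follows because optimality at choice time freezes~$\p(w)$ thereafter, so the subproofs used to assemble~$\p(w)$ are their choice-time values, to which the hypothesis applies recursively. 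Since~$u$ is unchosen, $u\notin\p(t)$ for any $t\in T_u$, and the acyclicity check succeeds. (The paper's proof is similarly silent on this point.)

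A smaller issue: your claim that \enquote{the descent cannot stop at a leaf} is not justified by leaves merely being in the initial~$Q$ --- being enqueued is not the same as being chosen. What actually happens if the descent reaches an unchosen leaf~$u$ is that $u\in Q$ and $\m(\p(u))\le\leaf_\m(\ell(u))=\m(\p^\star_u)\le\m(\p^\star)<\m(\p(v))$, which already yields the desired contradiction; this case should be handled explicitly rather than dismissed.
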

Since we can actually compute an optimal proof in polynomial time in the size of the whole derivation structure, it is irrelevant how the upper bound~$q$ in the decision problem~\OP is encoded, and hence the following results follow.

\begin{theorem}\label{th:OP-general-upperBound}
    For any monotone recursive \pis-measure~\m and polynomial deriver~\R, $\OP_\bin(\R,\m)$ is
    in~\PTime. It is in \ExpTime for all exponential derivers~\R.
\end{theorem}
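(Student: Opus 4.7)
The plan is to reduce the decision problem to the computation of an optimal proof and then invoke Lemma~\ref{lem:dijkstra}. Given an instance $(\Tmc,\eta,q)$ of $\OP_\bin(\R,\m)$, I would first reconstruct the derivation structure $\ds=\R(\Tmc,\eta)$ by querying the oracles $[\ds](i_1,\dots,i_l,i)$ (over tuples whose arity is polynomial in $|\Tmc|+|\eta|$ by assumption~2) and $[\ds](i,\alpha)$ (to recover labels, which have polynomial size by assumption~3). This produces a concrete representation of $\ds$ whose overall size is within a polynomial factor of $|\R(\Tmc,\eta)|$. I then run Algorithm~\ref{alg:dijkstra} on $\ds$ and~\m to obtain an optimal admissible proof $\p^\ast$ \wrt $\R(\Tmc,\eta)$.

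Once $\p^\ast$ is computed, I decide the instance by returning \true iff $\m(\p^\ast)\le q$. By axiom \pol of \pis-measures, $\m(\p^\ast)$ is computable in polynomial time in $|\p^\ast|\le|\R(\Tmc,\eta)|$. The key observation that makes the encoding of~$q$ irrelevant is that only a single comparison between the rational number $\m(\p^\ast)$ and the input $q$ is performed at the end; this comparison can be done in time linear in the bit-length of $q$, which is polynomial in the input size regardless of whether $q$ is given in unary or binary. Consequently, the complexity is determined entirely by the cost of computing $\p^\ast$.

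For the complexity bookkeeping, I would argue the two cases uniformly from Lemma~\ref{lem:dijkstra}, which guarantees that Algorithm~\ref{alg:dijkstra} runs in time polynomial in $|\R(\Tmc,\eta)|$. If \R is polynomial then $|\R(\Tmc,\eta)|\le p(|\Tmc|+|\eta|)$, so the whole procedure (oracle-based reconstruction, Dijkstra-like computation, weight comparison) runs in polynomial time, placing $\OP_\bin(\R,\m)$ in \PTime. If \R is exponential then $|\R(\Tmc,\eta)|\le 2^{p(|\Tmc|+|\eta|)}$, so the same procedure runs in time polynomial in this exponential bound, giving the \ExpTime upper bound.

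The only non-trivial point is confirming that Algorithm~\ref{alg:dijkstra} really does yield an \emph{admissible} optimal proof \wrt $\R(\Tmc,\eta)$, and not merely some optimal subgraph. This is exactly what Lemma~\ref{lem:dijkstra} asserts, and conceptually it rests on Lemma~\ref{lem:adm-hyperproof-inside}: since \m is a \pis-measure, the search for an optimal admissible proof can be restricted to subproofs of $\R(\Tmc,\eta)$, which is precisely the space over which the Dijkstra-like procedure operates. Given that lemma, the theorem follows by a direct complexity accounting; no further combinatorial work is needed here.
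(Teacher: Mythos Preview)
Your proposal is correct and follows essentially the same approach as the paper: invoke Lemma~\ref{lem:dijkstra} to compute an optimal proof in time polynomial in $|\R(\Tmc,\eta)|$, then compare its weight to~$q$, noting that the encoding of~$q$ is irrelevant since only a single comparison is needed. The paper states this in a single sentence preceding the theorem, whereas you spell out the oracle-based reconstruction of the derivation structure and the weight comparison explicitly; these are reasonable elaborations but do not change the argument.
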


\subsection{Proof Depth}

We now consider the measure \mdepth in more detail. We can show lower bounds of~\PTime 
and \ExpTime for polynomial and exponential derivers, respectively, although the latter only 
holds for upper bounds~$q$ encoded in binary.

Since our definition of $\OP(\R,\m)$ requires that the input entailment $\Tmc\models\eta$ already holds, we cannot use a straightforward reduction from the entailment problem in \EL or \ELI, however.
Instead, we show that ordinary proofs~\p for $\Tmc\models\eta$ satisfy $\m(\p)\le q$ for some~$q$, and then extend the TBox to~$\Tmc'$ in order to create an artificial proof~$\p'$ with $\m(\p')>q$.
In this way, we ensure that $\Tmc'\models\eta$ holds and can use~$q$ to distinguish the 
artificial from the original proofs.

For \ELI, we can use an observation from~\cite[Example~6.29]{baader_horrocks_lutz_sattler_2017} for this purpose.

\begin{restatable}[\!\!\cite{baader_horrocks_lutz_sattler_2017}]{proposition}{PropELILargeProof}
  \label{prop:eli-large-proof}
    For every $q\in\mathbb{Q}_{\ge 0}$ and $\ELI$ sentence of the form $A\sqsubseteq B$, where $A,B\in\NC$, one can construct in time polynomial in~$q$ an $\ELI$ theory~$\Tmc$ such $\Tmc\models A\sqsubseteq B$, and every 
    proof for $\Tmc\models A\sqsubseteq B$ in $\Eli$ is of depth larger than~$2^q$.
\end{restatable}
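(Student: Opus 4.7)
The plan is to adapt the standard construction from \cite[Example~6.29]{baader_horrocks_lutz_sattler_2017}, which exhibits polynomial-size $\ELI$ TBoxes whose canonical models have depth exponential in the size of the TBox, and then lift this from a statement about models to a statement about $\Eli$-proofs.

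First, setting $n := \lceil q \rceil + 1$, I would introduce concept names $A_0, \dots, A_{n-1}$ (encoding the bits of an $n$-bit counter), auxiliary concept names, a ``successor'' role~$r$, and its inverse~$r^-$. Following the textbook construction, I would build an $\ELI$ TBox~$\Tmc_n$ of size polynomial in~$n$ consisting of (i)~an axiom $A \sqsubseteq C_0$, where $C_0$ conjunctively encodes the counter value~$0$; (ii)~axioms that, from the conjunction~$C_k$ encoding counter value~$k < 2^n-1$, force the existence of an $r$-successor labelled with~$C_{k+1}$ (using inverse roles to transport bit-values across the $r$-edge); and (iii)~an axiom $C_{2^n-1} \sqsubseteq B$. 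Because $n$ is linear in~$q$ (under unary encoding of~$q$), $|\Tmc_n|$ is polynomial in~$q$. Semantic soundness $\Tmc_n \models A \sqsubseteq B$ follows because the canonical model of~$\Tmc_n$ contains a chain $x_0 \xrightarrow{r} x_1 \xrightarrow{r} \dots \xrightarrow{r} x_{2^n-1}$ with $x_0 \in A^{\Imc}$, $x_{2^n-1} \in B^{\Imc}$ and $x_k \in C_k^{\Imc}$.

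Second, I would prove the depth lower bound by induction over the $\Eli$-proof structure. The key invariant is: any $\Eli$-proof of $M \sqsubseteq C_k$ has depth at least~$k$. The base case $k=0$ is immediate. For the inductive step, I would analyse the four rules of Figure~\ref{fig:ELIcr}: CR1 only introduces axioms of the form $K \sqsubseteq A$ for $A \in K$ and cannot produce counter progress; CR2 can combine several premises of depth~$\le d$ into a conclusion of depth~$d+1$, but each premise $M \sqsubseteq A$ that carries counter information about value~$k$ must itself have depth~$\ge k$ by induction; and CR3/CR4, which are the only rules that actually traverse an $r$-edge, necessarily take a premise containing the ``previous'' counter value, inheriting its depth and adding one. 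Thus on the branch of the proof that carries the counter-value bits, depth strictly increases with each increment step.

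The main obstacle is the no-shortcut argument for CR2, since depth is a max rather than a sum: one must rule out the possibility that the counter increments can be derived ``in parallel'' on shallow branches and then glued together with low additional depth. The construction avoids this by forcing the representation of each counter value~$k$ to depend on having witnessed an actual $r$-chain of length~$k$ starting from an $A$-instance, so that counter information cannot be manufactured without first deriving the corresponding predecessor statement. Once this invariant is in place, applying it to the final step deriving $A \sqsubseteq B$ (which factors through $C_{2^n-1}$) yields depth at least $2^n > 2^q$, as required.
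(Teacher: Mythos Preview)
The paper does not actually supply its own proof of this proposition: it is stated with the citation \cite{baader_horrocks_lutz_sattler_2017} and introduced as ``an observation from~\cite[Example~6.29]{baader_horrocks_lutz_sattler_2017},'' i.e.\ it is imported as a known result. So there is nothing in the paper to compare your argument against beyond the pointer to that textbook example.

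Your sketch is in line with that textbook construction (binary counter encoded by concept names, increment forced along an $r$-chain using inverse roles, final counter value triggering~$B$), and the high-level shape of the depth argument is right. Two points to tighten if you flesh this out: first, the invariant ``any $\Eli$-proof of $M\sqsubseteq C_k$ has depth at least~$k$'' is not quite the right granularity, since $C_k$ is a conjunction and the calculus manipulates individual conjuncts; you will want an invariant phrased over the conjunctions that actually occur as left-hand sides in the calculus (the sets~$K,L,M$ of Figure~\ref{fig:ELIcr}), tracking which counter value a given conjunction can encode. Second, your ``no-shortcut'' discussion for $\mathsf{CR2}$ is the crux and currently only a promise; the clean way to discharge it is to argue semantically that any conjunction~$M$ appearing in a derived sentence and satisfying all bit-constraints for value~$k$ can only arise via $\mathsf{CR4}$ from a predecessor conjunction encoding value~$k-1$, because the TBox offers no other source for the distinguishing bit pattern. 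Once that is pinned down, the induction goes through as you describe.
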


We can now reduce the entailment problems for \EL and \ELI to obtain the claimed lower bounds.

\begin{restatable}{theorem}{multiHardnessTheorem}\label{th:poly-P-comp}
	The problems $\OP_\un(\Elk,\mdepth)$ and $\OP_\bin(\Eli,\mdepth)$ are \PTime-hard and \ExpTime-hard, respectively.
\end{restatable}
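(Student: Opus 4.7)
The plan is to reduce the known-hard entailment problems for $\EL$ and $\ELI$ to the two variants of $\OP$ in question. The obstacle flagged before the theorem is that $\OP(\R,\m)$ already assumes $\Tmc\models\eta$, so I cannot reduce from entailment naively. Following the hint given in the text, I would extend the input theory so that the entailment always holds through an artificial detour whose only proof has large depth, while keeping any short original proof available exactly for the positive instances; the depth threshold $q$ then separates the two cases.

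For $\OP_\un(\Elk,\mdepth)$, I would reduce from $\EL$ concept subsumption (\PTime-hard already for $A,B\in\NC$). Given $(\Tmc, A\sqsubseteq B)$, let $N$ be the polynomial bound on the size of $\Elk(\Tmc,A\sqsubseteq B)$, which also bounds the depth of every proof it contains. Set $\Tmc' := \Tmc \cup \Tmc^{*}$, where $\Tmc^{*}$ is the chain $\{A \sqsubseteq C_1,\; C_1 \sqsubseteq C_2,\; \dots,\; C_{N+1} \sqsubseteq B\}$ with fresh concept names $C_1,\dots,C_{N+1}$, and set $q := N$. Both $\Tmc'$ and the unary encoding of $q$ have polynomial size, so the reduction is even \LogSpace-computable. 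I would then argue: if $\Tmc \models A \sqsubseteq B$, the original proof lies in $\Elk(\Tmc,A\sqsubseteq B)\subseteq\Elk(\Tmc',A\sqsubseteq B)$ and has depth $\le N = q$; conversely, if $\Tmc \not\models A \sqsubseteq B$, then any admissible proof must use at least one chain axiom, and since each $C_i$ occurs only in~$\Tmc^{*}$, a direct count of $\mathsf{R}_\sqsubseteq$-applications forces the proof to traverse the full chain, yielding depth $\ge N+1 > q$.

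For $\OP_\bin(\Eli,\mdepth)$, I would run the same template but with Proposition~\ref{prop:eli-large-proof} in place of the hand-crafted chain. Starting from an instance $(\Tmc, A \sqsubseteq B)$ of \ELI subsumption ($\ExpTime$-complete), let $p$ be the polynomial governing the exponential size bound on $\Eli(\Tmc,A\sqsubseteq B)$, set $q := 2^{p(|\Tmc|+|A\sqsubseteq B|)}$ (which has polynomial binary length), and invoke the proposition with parameter $p(|\Tmc|+|A\sqsubseteq B|)$ (polynomial in the input) to obtain, in polynomial time, an $\ELI$ theory $\Tmc^{*}$ using only fresh concept names apart from $A,B$, such that every proof of $A \sqsubseteq B$ in $\Eli(\Tmc^{*},A\sqsubseteq B)$ has depth strictly larger than~$q$. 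Taking $\Tmc' := \Tmc \cup \Tmc^{*}$, the same case analysis as above goes through.

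The main obstacle, common to both reductions, is the ``isolation'' argument that rules out the deriver combining axioms from $\Tmc$ and $\Tmc^{*}$ in some unexpected way to shortcut the artificial detour. This hinges on the internal concept names of $\Tmc^{*}$ appearing nowhere in $\Tmc$ or in~$\eta$, so any subproof whose sink label mentions such a name must again be built from $\Tmc^{*}$-axioms, forcing the full traversal of the detour. A secondary technical point is that the polynomial/exponential bound on the size of $\Elk(\Tmc,\eta)$ and $\Eli(\Tmc,\eta)$ is assumed known (last paragraph of Section~\ref{sec:derivers}), which justifies computing $N$ and $q$ in polynomial time.
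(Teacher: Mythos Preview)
Your proposal is correct and follows essentially the same approach as the paper: for $\EL$ you pad the theory with a fresh chain of GCIs long enough to exceed the polynomial depth bound, and for $\ELI$ you replace the chain by the theory supplied by Proposition~\ref{prop:eli-large-proof}; the threshold~$q$ then distinguishes original proofs from the artificial detour. The paper's argument is the same up to naming and minor off-by-one choices in the chain length, and it likewise leaves the ``isolation'' step (that no shortcut mixing $\Tmc$ and~$\Tmc^{*}$ exists) at the level of a remark about the fresh concept names.
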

\begin{proof}
    For the \PTime-hardness, we provide a \LogSpace-reduction from the entailment problem of a GCI
    $A\sqsubseteq B$ with two concept names~$A,B$ from an \EL-theory~\Tmc, which is
    \PTime-hard~\cite{baader_horrocks_lutz_sattler_2017}.
    To reduce this problem to $\OP_\un(\Elk,\mtree)$, we need to find a theory~$\Tmc'$ and a
    number~$q$ such that $\Tmc'\models A\sqsubseteq B$ holds, and moreover $\Tmc\models A\sqsubseteq B$ holds
    iff $\Elk(\Tmc',A\sqsubseteq B)$ contains a proof of $\Tmc'\models A\sqsubseteq B$ of depth
    $\le q$ (\cf Lemma~\ref{lem:adm-hyperproof-inside}).
    
    First, observe that, since proofs must be acyclic, the depth of any proof of $A\sqsubseteq B$
    from~\Tmc is bounded by $q:=|\Elk(\Tmc,A\sqsubseteq B)|$, whose size in unary encoding is
    polynomial in the size of~\Tmc.
    We now construct
    \[ \Tmc' := \Tmc \cup \{ A\sqsubseteq A_1,\ A_1\sqsubseteq A_2, \dots, A_{q+2}\sqsubseteq B\}, \]
    \iftechnicalReport
    where $A_1,\dots,A_q$ are concept names that do not occur in~\Tmc.
    \else
    where $A_1,\dots,A_q$ are concept names not occurring in~\Tmc.
    Clearly, we have $\Tmc'\models A\sqsubseteq B$.
    \fi
    Furthermore, the existence of an admissible proof for $\Tmc'\models A\sqsubseteq B$ of depth at
    most~$q$ is equivalent to $\Tmc\models A\sqsubseteq B$, since any proof that uses the new
    concept names must take $q+1$ consecutive steps using rule~$\mathsf{R}_\sqsubseteq$, \ie must be
    of depth~$q+1$.
    Moreover, we can compute $q$ (in binary representation) and output it in unary representation
    using a logarithmically space-bounded Turing machine, and similarly for~$\Tmc'$.
    Hence, the above construction constitutes the desired \LogSpace-reduction.
    
    For the remaining result, we can use similar arguments about the exponential deriver~\Eli, where
    entailment is \ExpTime-hard \cite{baader_horrocks_lutz_sattler_2017}:
    \begin{itemize}
        \item the minimal depth of a proof in an exponential derivation structure is at most
        exponential, and this exponential bound~$q$ can be computed in polynomial time using binary
        encoding;
        \item by Proposition~\ref{prop:eli-large-proof}, there is an \ELI theory~\Tmc of size polynomial in the size of the binary encoding
        of~$q$ such that $\Tmc\models A\sqsubseteq B$ and any proof for $\Tmc\models A\sqsubseteq B$ must have at least depth~$q+1$.
        \qed
    \end{itemize}
\end{proof}

To demonstrate that the generic upper bounds from Theorem~\ref{th:OP-general-upperBound} are tight
even for unary encoding, we quickly consider the artificial measure \mldepth (\emph{logarithmic
depth}), which simply computes the (binary) logarithm of the depth of a given proof.
This is also a monotone recursive \pis-measure, since the logarithmic depth contains exactly the
same information as the depth itself.
It is easy to obtain the following lower bounds from the previous results about \mdepth.

\begin{corollary}\label{cor:log-depth}
  $\OP_\un(\Elk,\mldepth)$ is \PTime-hard and $\OP_\un(\Eli,\mldepth)$ is \ExpTime-hard.
\end{corollary}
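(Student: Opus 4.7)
The plan is to adapt the reductions from Theorem~\ref{th:poly-P-comp} to \mldepth, exploiting the fact that $\mldepth(\p)$ is (up to rounding) $\log_2 \mdepth(\p)$, and hence a unary bound~$q'$ on \mldepth corresponds to an exponentially larger bound~$2^{q'}$ on \mdepth. In other words, going from \mdepth to \mldepth compresses thresholds exponentially, which is exactly what is needed to upgrade the binary bound in the \Eli case to a unary one.

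For the \PTime-hardness of $\OP_\un(\Elk,\mldepth)$: I would reuse the same \LogSpace-reduction from \EL-entailment as in Theorem~\ref{th:poly-P-comp}. Let $q:=|\Elk(\Tmc,A\sqsubseteq B)|$, which is polynomial in the input size, and set $q':=\lceil\log_2 q\rceil+1$, which is logarithmic in the input and therefore polynomial in unary. Extend \Tmc to $\Tmc'$ by adding a chain $A\sqsubseteq A_1,\dots,A_{2^{q'}-1}\sqsubseteq B$ of fresh concept names of length $O(q)$, forcing any proof that uses the chain to have depth greater than $2^{q'}$. Then $\Tmc\models A\sqsubseteq B$ iff there is an admissible proof with $\mdepth\le q<2^{q'}$ iff there is one with $\mldepth\le q'$, which yields the desired reduction.

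For the \ExpTime-hardness of $\OP_\un(\Eli,\mldepth)$: In the original reduction, the threshold $q=|\Eli(\Tmc,A\sqsubseteq B)|$ is exponential in the input, which is why binary encoding was needed. I would instead use $q':=\lceil\log_2 q\rceil$, which is polynomial in the input and hence encodable in unary in polynomial space. By Proposition~\ref{prop:eli-large-proof}, an auxiliary \ELI-theory~$\Tmc_{\mathrm{hard}}$ can be constructed in time polynomial in~$q'$ such that every proof of $A\sqsubseteq B$ from $\Tmc_{\mathrm{hard}}$ has depth larger than~$2^{q'}\ge q$. Merging this with~\Tmc in the same way as in Theorem~\ref{th:poly-P-comp} yields~$\Tmc'$ in which admissible proofs either come from the original~\Tmc (with $\mdepth\le q$, hence $\mldepth\le q'$) or are forced through $\Tmc_{\mathrm{hard}}$ (with $\mldepth>q'$). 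Using~$q'$ as the threshold, encoded in unary, completes the reduction.

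The main subtlety is getting the threshold constants and padding right so that the genuine proofs cleanly fall at or below the threshold while the artificial proofs exceed it, despite the exponential compression of the measure. Once this is handled, no structural change to the original reductions is required, since \mldepth and \mdepth carry exactly the same information.
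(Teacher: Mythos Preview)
Your approach is correct, but it takes a more laborious route than the paper. The paper gives a single, generic \LogSpace reduction from $\OP_\bin(\R,\mdepth)$ to $\OP_\un(\R,\mldepth)$ that works for \emph{any} deriver~$\R$: a binary-encoded threshold~$q$ on depth is equivalent to the threshold $\log_2 q$ on logarithmic depth, and the unary encoding of $\log_2 q$ has the same length as the binary encoding of~$q$. Both hardness claims then follow immediately from Theorem~\ref{th:poly-P-comp} (for \Elk, via the trivial unary-to-binary step). You instead reopen the constructions of Theorem~\ref{th:poly-P-comp} and re-engineer the thresholds and chain lengths for $\mldepth$ directly. This works---and you correctly identify the off-by-one bookkeeping as the only real hazard---but it duplicates the effort already invested in Theorem~\ref{th:poly-P-comp} and obscures the simple reason the result holds: $\mldepth$ and $\mdepth$ carry the same information, so a change of measure is exactly a change of threshold encoding. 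The paper's argument makes this transparent and avoids touching the underlying reductions at all.
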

\begin{proof}
  \iftechnicalReport
  Regardless of the chosen deriver~\R, $\OP_\bin(\R,\mdepth)$ can be \LogSpace-reduced to $\OP_\un(\R,\mldepth)$, because in order to find a proof of depth at most $q$ (with $q$ given in binary), one can equivalently look for a proof whose logarithmic depth is bounded by the value $\log q$. 
  \else
  For any deriver~\R, $\OP_\bin(\R,\mdepth)$ can be \LogSpace-reduced to $\OP_\un(\R,\mldepth)$, because in order to find a proof of depth at most $q$ (with $q$ given in binary), one can equivalently look for a proof whose logarithmic depth is bounded by the value $\log q$. 
  \fi
  The unary encoding of~$\log q$ has the same size as the binary encoding of~$q$ and can be computed in \LogSpace by flipping all bits of the binary encoding of~$q$ to~$1$.
  \qed
\end{proof}

We now return to \mdepth and cover the remaining case of exponential derivers and unary encoding of the 
upper bound~$q$.

\begin{restatable}{theorem}{depthPSpace}\label{th:OP-depth-exp-unary-PSpace}
  $\OP_\un(\R,\mdepth)$ is in \PSpace for any exponential deriver $\R$. It is \PSpace-hard for the exponential deriver $\R=\Eli$.
\end{restatable}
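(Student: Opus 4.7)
The plan splits into the \PSpace upper bound and the matching \PSpace-hardness.

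\emph{Upper bound.} I will give an alternating polynomial-time algorithm, which equals \PSpace. The recursive procedure $\mathrm{Check}(i,k)$ tests whether vertex $v_i$ of the exponential-size derivation structure $\R(\Tmc,\eta)$ carries an admissible subproof of depth at most~$k$. Its base case ($k=0$) succeeds iff $\ell(v_i)\in\Tmc$ or $(\emptyset,v_i)\in E$, both checkable via oracle queries to $\R(\Tmc,\eta)$. In the recursive case, the algorithm existentially guesses the polynomially many premise indices of some edge $(S,v_i)\in E$ (verified via the oracle) and then universally branches over $s\in S$ to invoke $\mathrm{Check}(s,k-1)$. Correctness follows because, by Lemma~\ref{lem:adm-hyperproof-inside}, it suffices to consider subproofs of $\R(\Tmc,\eta)$, and every such DAG subproof has a tree unraveling of the same depth that the algorithm uncovers branch by branch. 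Starting from $(i_\eta,q)$, the recursion depth is bounded by~$q$, which is polynomial under unary encoding; each stack frame stores only a vertex index and one guessed edge descriptor, both of polynomial size by the deriver's assumptions on premise counts and label sizes. Hence the algorithm runs in alternating polynomial time, placing $\OP_\un(\R,\mdepth)$ in \PSpace.

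\emph{Lower bound.} I will reduce from QBF-satisfiability. Given an instance $\mathcal{Q} = Q_1 x_1 \dots Q_n x_n\,\varphi$, the aim is to build an \ELI theory $\Tmc_\mathcal{Q}$ and a GCI $A \sqsubseteq B$ such that (i)~$\Tmc_\mathcal{Q} \models A \sqsubseteq B$ holds unconditionally, and (ii)~some proof in $\Eli(\Tmc_\mathcal{Q}, A \sqsubseteq B)$ has depth at most $q$, for some $q$ polynomial in $|\mathcal{Q}|$, iff $\mathcal{Q}$ is true. The encoding mirrors the QBF evaluation tree: at $\forall$-levels, a single \ELI rule instance (e.g.\ based on rule $\mathsf{CR2}$ with two conjuncts) forces the subproof to branch universally into the $x_i=0$ and $x_i=1$ cases, each contributing to depth through the $1+\max$ in $\mdepth$; at $\exists$-levels, two alternative derivations of the same intermediate concept represent the disjunctive choice, so only one of them contributes to depth. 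At the leaves of this tree of partial assignments, derivations corresponding to satisfying assignments of $\varphi$ remain short, while falsifying ones admit only a long detour through a depth-inflating gadget in the spirit of Proposition~\ref{prop:eli-large-proof}, scaled down so that the forced depth exceeds~$q$. Since $q$ is polynomial in $|\mathcal{Q}|$, its unary encoding has polynomial size.

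The hard part is the lower bound, which requires orchestrating three competing requirements within the limited \ELI calculus of Figure~\ref{fig:ELIcr}: (a)~faithfully realising the AND/OR alternation of $\forall/\exists$ layers via multi-premise rules and alternative derivations; (b)~cleanly separating true from false leaves by proof depth, using a controlled depth-inflating gadget on the false side; and (c)~guaranteeing that $\Tmc_\mathcal{Q} \models A \sqsubseteq B$ holds irrespective of~$\mathcal{Q}$, so that the resulting instance of \OP is well-formed, all while keeping $|\Tmc_\mathcal{Q}|$ polynomial in $|\mathcal{Q}|$.
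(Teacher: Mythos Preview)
Your proposal is correct and follows essentially the same approach as the paper: for the upper bound you use the alternating-polynomial-time formulation while the paper phrases it as a nondeterministic depth-first search keeping one branch in memory (both rely on $\mathsf{AP}=\mathsf{NPSpace}=\PSpace$), and for the lower bound both reduce from QBF by encoding the $\forall/\exists$ alternation as AND/OR in the proof structure and then invoking Proposition~\ref{prop:eli-large-proof} to make $\Tmc\models A\sqsubseteq B$ hold unconditionally. The paper's concrete QBF encoding builds the assignment tree via role successors and propagates acceptance back with $\forall r^-$-axioms (using $B_1\sqcap B_2\sqsubseteq B$ for universal quantifiers), which instantiates exactly the abstract AND/OR scheme you describe.
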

\begin{proof}
    For the upper bound, we employ a depth-first guessing strategy: we guess a proof of depth
    at most~$q$, where at each time point we only keep one branch of the proof in memory. As the
    length of this branch is bounded by~$q$, and due to our assumptions on derivers, this procedure
    only requires polynomial space.
    
    For the lower bound, we provide a reduction from the \PSpace-complete QBF problem (satisfiability of quantified Boolean formulas). Let $\quant_1 
    x_1\quant_2
    x_2\ldots\quant_m x_m.\phi$ be a quantified Boolean formula, where for $i\in\{1,\ldots,m\}$,
    $\quant_i\in\{{\exists},{\forall}\}$, and $\phi$ is a formula over $\{x_1,\ldots,x_m\}$. 
    We 
    assume
    $\phi$ to be in negation normal form, that is, negation only occurs directly in front of a 
    variable.
    We
    construct an \ELI theory $\Tmc$ and a number $q$, both of size polynomial in the size of the
    formula, such that $\Tmc\models A\sqsubseteq B$ holds (\cf Definition~\ref{def:best}) and
    $\Tmc$ has a proof for $A\sqsubseteq B$ of depth $q$ iff the QBF formula is valid.
    We use
    two roles $r_1$, $r_2$ to deal with the variable valuations, concept names $A_0$, 
    $\ldots$,
    $A_{m}$ to count the quantifier nesting, and a concept name $A_\psi$ for every 
    sub-formula
    $\psi$ of $\phi$. In addition, we use the concept names $A$ and $B$ occurring in
    the conclusion, and two concept names $B_1$ and $B_2$.
    
    The concept name $A$ initializes 
    the 
    formula at
    quantifier nesting level $0$:
    \begin{align*}
    A\sqsubseteq A_0
    \end{align*}
    For every $i\in\{1,\ldots,m\}$, $\Tmc$ contains the following sentence to select a
    truth valuation for $x_i$, increasing the nesting depth in each step.
    \begin{align}
    A_{i-1}&\sqsubseteq \exists r_1.(A_{i}\sqcap A_{x_i})\\
    A_{i-1}&\sqsubseteq \exists r_2.(A_{i}\sqcap A_{\neg x_i}).
    \end{align}
    To ensure truth valuations are kept along the role-successors, we use the following 
    sentences for
    every $l\in\{x_i,\neg x_i\mid 1\leq i\leq m\}$:
    \begin{align}
    A_l&\sqsubseteq\forall r_1.A_l \qquad A_l\sqsubseteq\forall r_2.A_l
    \end{align}
    The following GCIs are now used to evaluate $\phi$. For every conjunction 
    $\psi=\psi_1\wedge\psi_2$
    occurring in $\phi$, we use:
    \begin{align}
    A_{\psi_1}\sqcap A_{\psi_2}\sqsubseteq A_\psi,
    \end{align}
    and for every disjunction $\psi=\psi_1\vee\psi_2$, we use:
    \begin{align}
    A_{\psi_1}\sqsubseteq A_\psi \qquad A_{\psi_2}\sqsubseteq A_\psi
    \end{align}
    Finally, the following GCIs are used to propagate the result of the evaluation back 
    towards the
    start.
    \begin{align}
    A_\phi&\sqsubseteq B \\
    A_i\sqcap B&\sqsubseteq\forall r_1^-.B\quad &
    A_i\sqcap B&\sqsubseteq\forall r_2^-.B  &&\quad\text{ if }\quant_i={\exists}\\
    A_i\sqcap B&\sqsubseteq\forall r_1^-.B_1\qquad &
    A_i\sqcap B&\sqsubseteq\forall r_2^-.B_2 \qquad
    B_1\sqcap B_2\sqsubseteq B &&\quad\text{ if }\quant_i={\forall}
    \end{align}
    One can now show that there exists a proof for $A\sqsubseteq B$ from $\Tmc$ of depth at most~$q$
    iff the QBF formula is valid, where $q$ is polynomial and determined by the size and structure
    of~$\phi$.
    Finally, we can extend \Tmc with the sentences from Proposition~\ref{prop:eli-large-proof} to ensure that $\Tmc\models A\sqsubseteq B$ holds while retaining this equivalence.
    \qed
\end{proof}

\subsection{The Tree Size Measure}

The tree size measure was discussed already in~\cite{LPAR23:Finding_Small_Proofs_for}, where tight bounds were provided for polynomial derivers and exponential derivers with unary encoding. For the case of exponential derivers with binary encoding, only an \ExpTime upper bound was provided, and the precise complexity left open. We improve this result by showing that $\OP_\bin(\R,\mtree)$ can indeed be decided in \PSpace.

\begin{restatable}{theorem}{treePSapceUpperbound}\label{th:OP-tree-exp-binary-PSpace-mem}
    For any exponential deriver \R, $\OP_\bin(\R,\mtree)$ is in $\PSpace$.
\end{restatable}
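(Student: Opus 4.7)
The plan is to describe an alternating polynomial-time algorithm, yielding $\PSpace$ by the standard equivalence. Since $\mtree(\p)$ equals the size of the tree unraveling of~$\p$ (invariance under unraveling was established earlier), it suffices to search for an admissible tree proof with at most $q$ vertices. The obstacle is that $q$ is encoded in binary, so its value can be exponential in the input, and a naive top-down recursion would have exponential depth. The key tool is the classical centroid decomposition for trees: every tree on $n$ vertices has a centroid whose removal leaves components of size at most $n/2$. Recursing on the centroid-induced parts approximately halves the budget at every step, giving $O(\log q)$ recursion depth, which is polynomial in the input length.

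Concretely, I introduce a recursive predicate $\mathsf{check}(v,H,b)$ asserting that there is an admissible sub-hypergraph of $\R(\Tmc,\eta)$ of tree size at most $b$ rooted at $v$, in which each $c\in H$ appears as a leaf ``hole'' that need not be grounded in $\Tmc$, while every other leaf $v'$ satisfies $\ell(v')\in\Tmc$ or $(\emptyset,v')\in E$. The original problem reduces to $\mathsf{check}(v_\eta,\emptyset,q)$. For $b=1$, verify via the oracle that either $v\in H$ or that $v$ is a grounded leaf. Otherwise, existentially guess a centroid $c'$, its incoming edge $(S',c')\in E$, a distribution of $H$ among the sub-parts, and sizes $b_\mathrm{up},(b_w)_{w\in S'}$ with $b_\mathrm{up}+\sum_{w\in S'}b_w+1\le b$ and each of them bounded by $\lceil b/2\rceil$ (the centroid condition). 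Then universally pick one part to verify: either $\mathsf{check}(v,H_\mathrm{up}\cup\{c'\},b_\mathrm{up}+1)$ for the above part, which acquires $c'$ as a new hole, or $\mathsf{check}(w,H_w,b_w)$ for some $w\in S'$.

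Each recursion approximately halves $b$ and increases $|H|$ by at most one, so after $O(\log q)$ steps $b$ reaches~$1$ and $|H|=O(\log q)$; both remain polynomial. Each call frame stores only polynomially many vertex indices, an edge descriptor, up to $O(\log q)$ holes and budgets of $O(\log q)$ bits, giving polynomial space per frame, and each step issues only polynomially many oracle queries to $\R(\Tmc,\eta)$. The procedure therefore runs in alternating polynomial time, giving the $\PSpace$ upper bound. The main subtlety I expect is the bookkeeping of the hole sets across recursion levels together with verifying that the existentially guessed sizes really correspond to a centroid decomposition of some witness tree; correctness will then follow by induction on~$b$, with the base cases discharged directly through the oracle.
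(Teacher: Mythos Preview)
Your approach is correct and takes a genuinely different route from the paper. The paper gives a direct nondeterministic polynomial-space procedure: it guesses the tree proof top-down while maintaining a frontier~$S$ of pairs $\tup{\eta',q'}$ (open subgoals together with their remaining tree-size budget), and in each step expands a tuple with the \emph{smallest} remaining budget. The key lemma is that this minimum-first discipline lets~$S$ be organised into an auxiliary tree of depth at most $\log_2 q$ with branching bounded by the maximal premise arity~$p$, so $|S|\le p\cdot\log_2 q$ stays polynomial throughout. You instead invoke $\mathsf{APTime}=\PSpace$ together with centroid decomposition: the existential player guesses a centroid of the (unknown) witness tree, splitting it into pieces each of size roughly $b/2$, and the universal player selects which piece to verify, giving recursion depth $O(\log q)$. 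Both arguments ultimately exploit the same halving phenomenon, but the paper does so implicitly through the greedy expansion order, whereas you make it explicit via the centroid. Your version is more modular and closer to a textbook Savitch-style argument; the paper's version avoids alternation and yields a concrete single-pass nondeterministic procedure.

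One point to tighten: for soundness of the gluing step, $H$ should be a \emph{multiset} whose elements are matched \emph{exactly} by the non-grounded leaves of the witness tree, not merely a set of permitted hole labels. Under the looser reading, the upper piece could contain several leaves mapping to the freshly introduced hole~$c'$, and filling all of them with copies of the centroid's subtree may exceed the overall budget~$b$. Correspondingly, the base case $b=1$ must check that $H=\emptyset$ (grounded leaf) or $H=\{v\}$, not just ``$v\in H$ or $v$ is grounded''. These are precisely the bookkeeping issues you already flagged as the expected subtlety; once $H$ is handled as a multiset with exact matching, the induction on~$b$ goes through as outlined.
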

\begin{proof}[sketch]
    We describe a non-deterministic procedure for $\OP_\bin(\R,\mtree)$, 
    in polynomial space.
    Let \Tmc be a theory, $\eta$ the goal sentence, and $q$ a rational number in binary 
    encoding.
    By Lemma~\ref{lem:adm-hyperproof-inside}, it suffices to find a proof \p for 
    $\Tmc\models\eta$ in $\R(\Tmc,\eta)$ with $\mtree(\p)\le q$.
    The
    procedure guesses such a proof starting from the conclusion, while keeping in memory a set $S$ of tuples $\tup{\eta',q'}$, where $\eta'$ is a sentence and 
    $q'\leq q$ a
    rational number. Intuitively, such a tuple states: \enquote{We still need to guess a proof for $\eta'$ of tree size at most $q'$.}
    \begin{enumerate}
        \item Initialize $S:=\{\tup{\eta,q}\}$.
        \item While $S\neq\emptyset$,
        \begin{enumerate}
            \item select from $S$ a tuple $\tup{\eta',q'}$ such that for all tuples 
            $\tup{\eta'',q''}\in S$ it holds that $q''\geq q'$;
            \item guess a hyperedge $(\{v_1,\dots,v_m\},v')$ in $\R(\Tmc,\eta)$ (using the oracle access described in Section~\ref{sec:derivers}) and $m$  
            numbers $q_1$, $\ldots$, $q_m$, such that $\ell(v')=\eta'$ and 
            $q_1+\ldots+q_m+1\le q'$;
            and
            \item replace $\tup{\eta',q'}$ in~$S$ by the tuples $\tup{\el(v_1),q_1}$, $\ldots$, 
            $\tup{\ell(v_m),q_m}$.
        \end{enumerate}
    \end{enumerate}
    \begin{figure}[tb]
  \newcommand{\hiddenColor}{gray!70}
        \begin{minipage}{.5\linewidth}
            \centering
        \begin{tikzpicture}[
            sibling distance=2em,
            level distance= 2em,
            every node/.style = {shape=rectangle, rounded corners,
              draw, align=center},
              edge from parent path={(\tikzparentnode.north) .. 
                    controls +(0,.225) and +(0,-.225).. (\tikzchildnode.south)}
              ]
            \node {a} [grow'=up]
              child { node[thick,fill=gray!50] {b} 
                child { node[draw=\hiddenColor] {\color{\hiddenColor}c} 
                  child { node[draw=\hiddenColor] {\color{\hiddenColor}d}}
                }
                child { node[draw=gray!70] {\color{gray}e}
                  child[missing]
                  child { node[draw=gray] {\color{gray}f}}
                  child { node[draw=gray] {\color{gray}g}}
                }
                child { node[draw=gray] {\color{gray}h}
                }
              }
              child[missing]
              child[missing]
              child[missing]
              child { node[thick,fill=gray!50] {i}
                child { node[draw=gray] {\color{gray}j}
                  child { node[draw=gray] {\color{gray}k} }
                  child[missing]
                  child[missing]
                }
                child { node[draw=gray] {\color{gray}l} 
                  child { node[draw=gray] {\color{gray}m}}
                  child { node[draw=gray] {\color{gray}n}}
                  child { node[draw=gray] {\color{gray}o}}
                } 
              };
          \end{tikzpicture}
        \end{minipage}%
        \begin{minipage}{.5\linewidth}
            \centering
        \begin{tikzpicture}[
            sibling distance=2em,
            level distance= 2em,
            every node/.style = {shape=rectangle, rounded corners,
              draw, align=center},
              edge from parent path={(\tikzparentnode.north) .. 
                    controls +(0,.225) and +(0,-.225).. (\tikzchildnode.south)}]
            \node {a}[grow'=up]
              child { node {b} 
                child { node[thick,fill=gray!50] {c} 
                  child { node[draw=gray] {\color{gray}d}}
                }
                child { node[thick,fill=gray!50] {e}
                  child[missing]
                  child { node[draw=gray] {\color{gray}f}}
                  child { node[draw=gray] {\color{gray}g}}
                }
                child { node[thick,fill=gray!50] {h}
                }
              }
              child[missing]
              child[missing]
              child[missing]
              child { node[thick,fill=gray!50] {i}
                child { node[draw=gray] {\color{gray}j}
                  child { node[draw=gray] {\color{gray}k} }
                  child[missing]
                  child[missing]
                }
                child { node[draw=gray] {\color{gray}l} 
                  child { node[draw=gray] {\color{gray}m}}
                  child { node[draw=gray] {\color{gray}n}}
                  child { node[draw=gray] {\color{gray}o}}
                } 
              };
          \end{tikzpicture}
        \end{minipage}

        \begin{minipage}{.5\linewidth}
            \centering
        \begin{tikzpicture}[
            sibling distance=3em,
            level distance= 2em,
            every node/.style = {
              align=center}]]
            \node {$\epsilon$}
              child { node {\tup{b,7}} }
              child { node {\tup{i,7}} }
                ;
          \end{tikzpicture}
        \end{minipage}%
        \begin{minipage}{.5\linewidth}
            \centering
            \begin{tikzpicture}[
                sibling distance=3em,
                level distance= 2em,
                every node/.style = {
                  align=center}]]
                \node {$\epsilon$}
                  child { node {\tup{i,7}} 
                    child { node {\tup{c,2}} }
                    child { node {\tup{e,3}} }
                    child { node {\tup{h,1}} }
                  }
                    ;
              \end{tikzpicture}
            \end{minipage}
    \caption{Illustration of the argument used for 
    Theorem~\ref{th:OP-tree-exp-binary-PSpace-mem}. On the top, the partially 
    guessed proof tree for two consecutive steps of the algorithm is shown, where the dark 
    nodes are what is currently kept in memory.
    On the bottom, we see how the corresponding tuples are organized into a tree 
    satisfying Conditions~\ref{pspace-root}--\ref{pspace-sum}.
    }
    \label{fig:illustration-tree-algorithm}
\end{figure}

    There is a proof for $\Tmc\models\eta$ of tree size at most $q$ iff every step in the 
    algorithm is successful. To show that it only requires polynomial space, we show that 
    during the computation, the number of elements in $S$ is always polynomially bounded. 
    For this, we show 
    that the elements in $S$ can 
    always be organized into a tree with the following properties:
    \begin{enumerate}[label=\textbf{S\arabic*}]
        \item\label{pspace-root} the root is labeled with $\epsilon$,
        \item\label{pspace-labels} every other node is labeled with a distinct element from 
        $S$,
        \item\label{pspace-two} every node that is not the root or a leaf has at least 2 
        children,
        \item\label{pspace-p} every node has at most $p$ children, where $p$ is the maximal number of premises in any inference in $\R(\Tmc,\eta)$, 
        which we assumed to be polynomial in the input,
        \item\label{pspace-half} every node $\tup{\eta',q'}$ has at most 1 child 
        $\tup{\eta'',q''}$ 
        that is not a leaf and for this child it holds that $q''< \frac{q'}{2}$,
        \item\label{pspace-sum} for every node labeled $\tup{\eta',q'}$ with children 
        labeled 
        $\tup{\eta_1,q_1}$, $\ldots$, $\tup{\eta_m,q_m}$, we have $q_1+\ldots+q_m<q'$.
    \end{enumerate}
    We prove this by induction on the steps of the algorithm, where in each step, we either replace 
    one tuple in the tree, or put the new tuples 
    under the leaf with the currently smallest value~(see Fig.\ref{fig:illustration-tree-algorithm}).
    By~\ref{pspace-two} and because every number in $S$ is bounded by~$q$, we can show that the tree has depth at most $\log_2{q}$, which with~\ref{pspace-p} and~\ref{pspace-half} implies that it has at most $p\cdot \log_2 q$ nodes. \ref{pspace-labels} then implies that
    that
    $\left|S\right|\leq p\cdot\log_2{q}$ is always satisfied, and thus that $S$ is polynomially bounded.\qed
\end{proof}

A corresponding lower bound can be found for the exponential deriver $\Eli$ by a reduction of the word problem for deterministic Turing machines with polynomial space bound.

\begin{restatable}{theorem}{treePSpaceLowerBound}\label{th:OP-tree-exp-binary-PSpace-hard}
    For the exponential deriver $\Eli$, $\OP_\bin(\Eli,\mtree)$ is $\PSpace$-hard.
\end{restatable}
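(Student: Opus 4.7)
The plan is to reduce from the word problem for deterministic Turing machines with polynomial space bound, which is $\PSpace$-complete. Given a DPSPACE machine $M$ with space bound $p(n)$ and input $w$ of length $n$, we will construct in polynomial time an $\ELI$ theory $\Tmc$, a conclusion $A\sqsubseteq B$, and a number $q$ in binary such that $\Tmc\models A\sqsubseteq B$ and an $\Eli$-admissible proof of tree size at most $q$ exists iff $M$ accepts $w$.

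Configurations of $M$ on $w$ are encoded by conjunctions of concept names indexed by the $p(n)$ tape cells, recording the symbol in each cell, the head position, and the state. A designated role $r$ models the transition from one configuration to its successor. For each local transition rule of $M$, we include $\ELI$ sentences of the form $K\sqsubseteq\exists r.L$ together with auxiliary sentences using inverse roles and universal restrictions (exploiting $\mathsf{CR}_3$ and $\mathsf{CR}_4$ of Figure~\ref{fig:ELIcr}) to copy the contents of all unchanged tape cells from the current to the successor configuration. A designated concept $\mathit{Acc}$ is derived as soon as the accepting state is reached, and a ``propagation-back'' chain analogous to the one in the proof of Theorem~\ref{th:OP-depth-exp-unary-PSpace} pushes $\mathit{Acc}$ back along the $r$-chain, ultimately yielding $B$ from $A$.

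Since $M$ halts in at most $2^{c\cdot p(n)}$ steps for some constant $c$, if $M$ accepts, the induced simulation proof has tree size bounded by $c'\cdot 2^{c\cdot p(n)}$ for a suitable constant $c'$; we set $q$ to this value, whose binary encoding has polynomial size. To ensure $\Tmc\models A\sqsubseteq B$ even when $M$ rejects, we add the fallback sentences from Proposition~\ref{prop:eli-large-proof}, scaled so as to guarantee an alternative proof whose tree size strictly exceeds $q$. This alternative is deliberately made syntactically disjoint from the simulation (\eg by using fresh concept names throughout the fallback chain).

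The main obstacle is showing that when $M$ rejects, no admissible proof of tree size at most $q$ exists. For this, we must argue that (i) the only way to derive $\mathit{Acc}$ through the simulation is to follow the unique computation of $M$ faithfully, so rejection precludes deriving $\mathit{Acc}$ through the simulation gadget; and (ii) any hybrid proof that combines simulation fragments with parts of the fallback chain still exceeds $q$, because the fallback already does on its own and the simulation alone cannot derive $B$ when $M$ rejects. The core technical work is a careful design of transition and propagation gadgets so that inference steps along the simulation path are in one-to-one correspondence with configuration transitions, which, together with the deterministic behaviour of $M$ and the separation from the fallback, yields the required tight bound on the minimum achievable tree size.
\qed
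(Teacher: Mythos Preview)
Your proposal is correct and follows essentially the same route as the paper: a reduction from the word problem for polynomially space-bounded deterministic Turing machines, encoding configurations as conjunctions of concept names, using a single role~$r$ for the step relation, $\forall r$-axioms to propagate unchanged tape cells, an $\exFont{Accept}$ concept pushed back via $\forall r^-$, and the fallback from Proposition~\ref{prop:eli-large-proof} to guarantee that the entailment always holds while keeping any non-simulation proof strictly larger than~$q$. The paper additionally spells out the concrete axioms and carries out an explicit tree-size calculation step by step to pin down~$q$, and it justifies the fallback bound via the simple observation $\mtree(\p)\ge\mdepth(\p)$ rather than appealing to syntactic disjointness, but these are refinements of the same argument you outline.
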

\begin{proof}[sketch]
    Let $T=\tup{Q,\Gamma,\blank,\Sigma, \delta,q_0,F}$ be a deterministic Turing machine,
    where $Q$ is
    the set of states, $\Gamma$ the tape alphabet, $\blank\in\Gamma$ the
    blank symbol, $\Sigma\subseteq\Gamma$ the input alphabet,
    $\delta:Q\times\Gamma\not\rightarrow
    Q\times\Gamma\times\{-1,0,+1\}$ the partial transition function,
    $q_0$ the initial state, and
    $F\subseteq
    Q$ the accepting states. We assume that $T$ is polynomially space bounded, that is, there is a polynomial $p$ such 
    that on
    input
    words $w\in\Sigma^*$, $T$ only accesses the first $p(\lvert w\rvert)$ cells of the 
    tape. For
    a
    word~$w$, we denote by $w[i]$ its $i$th letter.
    For some fixed word $w$, we construct a theory $\Tmc$ using the following names, where $k=p(\lvert w\rvert)$:
    \begin{itemize}
        \item \exFont{Start} marks the inital and \exFont{Accept} an accepting configuration;
        \item to denote that we are in state $q\in Q$, we use a concept name $S_q$;
        \item for every $a\in\Gamma$ and $i\in\{0,\ldots,k\}$, we use a concept name 
        $A_i^a$
        denoting that the letter $a$ is on tape position $i$;
        \item for every $i\in\{0,\ldots,k\}$, we use the concept name $P_i^{+}$ to denote
        that the head is currently on position~$i$, and  $P_i^{-}$ to denote that it is not;
        \item the role $r$ is used to express the transitions between the configurations.
    \end{itemize}
    For convenience, we present the theory not in the required normal form, but 
    aggregate
    conjunctions on the right.
    The following sentence describes the initial configuration.
    \begin{align}
    \exFont{Start}\sqsubseteq S_{q_0}
    \sqcap\bigsqcap_{i=0}^{\lvert w\rvert-1}A_i^{w[i]}
    \sqcap\bigsqcap_{i=\lvert w\rvert}^{k}A_i^{\blank}
    \sqcap P_0^{+}\sqcap\bigsqcap_{i=1}^{k}P_i^{-}
    \label{al:initial}
    \end{align}
    The transition from one configuration to the next is encoded with the following sentences 
    for every $i\in\{0,\dots,k\}$ and every
    $\tup{q,a}\in Q\times\Gamma$ with $\delta(q,a)=\tup{q',b,d}$:
    \begin{align}
    S_q\sqcap A_i^a\sqcap P_i^+ &\sqsubseteq \exists r.S_{q'}\sqcap \forall 
    r.A_i^b\sqcap
    \forall r.P_{i+d}^+\sqcap\bigsqcap_{j\in\{0,\ldots,k\}\setminus\{i+d\}}\forall r.P_j^-
    \label{al:transition-function}
    \\
    A_i^a\sqcap P_i^-&\sqsubseteq\forall r.A_i^a
    \label{al:keep-tape}
    \end{align}
    Finally, we use the following sentences to detect accepting configurations and propagate 
    the
    information of acceptance back to the initial configuration
    \begin{align}
    S_f & \sqsubseteq \exFont{Accept}\text{ for all $f\in F$,}
    \label{al:final}
    \\
    \label{al:accept}\exFont{Accept}&\sqsubseteq\forall r^-.\exFont{Accept}
    \end{align}
    One can find a number $q$ exponential in $k$ and the size of $T$ s.t. that there is a proof 
    for $\Tmc\models\exFont{Start}\sqsubseteq\exFont{Accept}$ with tree size at most $q$ iff $T$ 
    accepts $w$. Using Proposition~\ref{prop:eli-large-proof}, we can extend $\Tmc$ to a theory 
    $\Tmc'$ s.t. $\Tmc'\models\exFont{Start}\sqsubseteq\exFont{Accept}$, while a proof of tree 
    size $q$ exists iff $T$ accepts $w$ (observe that $\mtree(\p)\ge\mdepth(\p)$ holds for all proofs~\p).\qed
\end{proof}

\section{Conclusion}\label{sec:conclusion}

We have investigated the complexity of finding optimal proofs \wrt quality measures that satisfy the property of being \emph{monotone recursive}. Two important examples of this class of measures, \emph{depth} and \emph{tree size}, have been considered in detail in combination with exponential and polynomial \reasoners. The obtained results are promising: given a \reasoner, the search for an optimal proof for an entailment can be easier than producing all of the proofs by this \reasoner.
The algorithms used to show the upper bounds can serve as building blocks for finding an optimal proof \wrt to a monotone recursive measure automatically.

We conjecture that weighted versions of \emph{tree size} and \emph{depth}, where sentences or inference steps can have associated rational weights, are also monotone recursive, and the generic upper bounds established in this paper can be straightforwardly applied to them. However, a more thorough study is required here, since the complexity of the decision problem depends on the exact way in which the weights are employed. This step towards weighted measures is motivated by user studies~\cite{DBLP:journals/kbs/HorridgeBPS13,DBLP:conf/semweb/AlharbiHSHT17,DBLP:conf/ekaw/NguyenPPW12}, demonstrating that different types of sentences and logical inferences can be more or less difficult to understand.

\paragraph{Acknowledgements} This work was supported by the DFG in grant 389792660
as part of TRR~248 (\url{https://perspicuous-computing.science}), and QuantLA, GRK 1763 (\url{https://lat.inf.tu-dresden.de/quantla}).

\def\doi#1{\url{https://doi.org/#1}}
\bibliographystyle{splncs04}%

\iftechnicalReport

\clearpage

\appendix

\section{Appendix}

\subsection{Hypergraphs}
\begin{definition}[Hypergraph]
    A \emph{(finite, directed, labeled) 
    hypergraph}~\cite{DBLP:journals/cor/NielsenAP05} is a 
    triple $H=(V,E,\el)$, where
    \begin{itemize}
        \item $V$ is a finite set of \emph{vertices},
        \item $E$ is a set of \emph{hyperedges}~$(S,d)$ with \emph{source vertices} 
        $S\subseteq V$
        and
        \emph{target vertex} $d\in V$, and
        \item $\el\colon V\to \mathcal{S}_\L$ is a \emph{labeling function} that assigns 
        sentences to
        vertices.
    \end{itemize}
\end{definition}
We extend the function~$\ell$ to hyperedges as follows: $\ell(S,d):=\big(\{\ell(s)\mid 
s\in 
S\},\ell(d)\big)$.
The \emph{size} of~$H$, denoted~$|H|$, is measured by the size of the labels of
its hyperedges: $$\card{H}:=\sum_{(S,d)\in E}\card{(S,d)}, \text{ where } 
\card{(S,d)}:=\card{\el(d)}+\sum_{v\in
    S}\card{\el(v)}.$$
A vertex $v\in V$ is called a \emph{leaf} if it has no incoming hyperedges, \ie there is no
$(S,v)\in E$; and $v$ is a \emph{sink} if it has no outgoing hyperedges, \ie there is no 
$(S,d)\in
E$ such that $v\in S$. We denote the set of all leaves and the set of all sinks in~$H$ as
$\mathit{leaf}(H)$ and $\mathit{sink}(H)$, respectively.

A hypergraph $H'= (V', E', \el')$ is called a \emph{subgraph} of $H= (V, E, \el)$ if 
$V'\subseteq 
V$, $E'\subseteq E$ and $\el'=\el|_{V'}$.
In this case, we also say that $H$ \emph{contains} $H'$ and write $H' \subseteq H$.
Given two hypergraphs $H_1=(V_1,E_1,\el_1)$ and $H_2=(V_2,E_2,\el_2)$ \st
$\el_1(v)=\el_2(v)$ for every $v\in V_1\cap V_2$, the \emph{union} of the two 
hypergraphs is
defined as $H_1 \cup H_2:=$ $(V_1\cup V_2,E_1\cup E_2, \el_1 \cup \el_2)$.

\begin{definition}[Cycle, Tree]
    Given a hypergraph $H=(V,E,\el)$ and $s,t\in V$, a \emph{path} $P$ of length $q\geq 
    0$ 
    in~$H$ from~$s$ to~$t$ is a sequence of vertices and
    hyperedges
    \[ P=(d_0,(S_1,d_1),d_1,(S_2,d_2),\dots, d_{q-1},(S_q,d_q),d_q), \]
    where $d_0=s$, $d_q=t$, and $d_{j-1}\in S_j$ for all $j$, $1\le j\le q$. By $|P|$ we denote the length of a path $P$. 
    If
    there is such a path of length $q > 0$ in~$H$, we say that $t$ is \emph{reachable} 
    from~$s$
    in~$H$.
    If $t = s$, then $P$ is called a \emph{cycle}.
    The hypergraph~$H$ is \emph{acyclic} if it does not contain a cycle.
    The hypergraph~$H$ is \emph{connected} if
    every vertex is connected to every other vertex by a series of paths and reverse paths.

    A hypergraph $H=(V,E,\el)$ is called a \emph{tree} with \emph{root} $t\in V$ if $t$ is 
    reachable
    from every vertex $v\in V\setminus\{t\}$ by exactly one path. In particular, the root is 
    the only
    sink in a tree, and all trees are acyclic and connected.
\end{definition}

\begin{definition}[Homomorphism]\label{def:homomorphism}
    Let $H=(V,E,\el)$, $H'=(V',E',\el')$ be two hypergraphs.  A \emph{homomorphism} 
    from $H$ 
    to $H'$, denoted $h\colon H\rightarrow H'$, is a mapping $h\colon V\to V'$ s.t.\ for all 
    $(S,d)\in E$, one has $h(S,d):=(\{h(v)\mid v\in S\},h(d))\in E'$ and, for all $v\in V$, it 
    holds 
    that $\el'(h(v))=\el(v)$.
    Such an~$h$ is an \emph{isomorphism} if it is a bijection, and its inverse, $h^-\colon 
    H' \to 
    H$,
    is also a homomorphism.
\end{definition}

\begin{definition}[Hypergraph Unraveling]\label{def:unraveling}
    The \emph{unraveling} of an acyclic hypergraph $H=(V,E,\ell)$ at a vertex $v\in V$ is 
    the 
    tree $H_T=(V_T,E_T,\ell_T)$, where $V_T$ consists of $v$ as well as all paths in~$H$ 
    that 
    end in~$v$, $E_T$ contains all hyperedges $(\{P_1,\dots,P_n\},P)$ (resp.\ 
    $(\{P_1,\dots,P_n\},v)$) where each~$P_i$ is of the form $(d_i,(S,d))\cdot P$ (resp.\ 
    $(d_i,(S,v),v)$) such that $S=\{d_1,\dots,d_n\}$, $\ell_T(v)=\ell(v)$ and $\ell_T(P)$ is 
    the 
    label of the starting vertex of~$P$ in~$H$.
    
    Moreover, the mapping $h_T\colon V_T\to V$ that maps each path to its starting vertex 
    and 
    $v$ to itself is a homomorphism from~$H$ to~$H_T$.
\end{definition}
The tree in Figure~\ref{fig:ex:proof-tree-2} represents the unraveling of the 
hypergraph 
from Figure~\ref{fig:ex:proof-graph-2}.

\subsection{Additional Proofs}

\finitePathsSinkConclusion*
\begin{proof}
    The first statement trivially follows from the acyclicity and the only sink $v_\eta$ in \p. 
    The
    length of a path in \p can be bounded by~$|V|$.
    
    The second claim can be shown by an induction on the depth of \p. Namely, for every 
    $k$
    and every $w\in V$ \st all paths leading to~$w$ have length at most~$k$ it holds that
    $\Tmc\models \el(w)$. The induction base follows from the fact that the leaves are 
    labeled
    with the sentences from \Tmc. For the induction step, for a vertex $w\in V$, we 
    consider an
    hyperedge $(S,w)\in E$. Every $s\in S$ satisfies the induction hypothesis and, thus,
    $\Tmc\models \el(s)$. By \p being a derivation structure, it holds $\{\el(s) | s\in S\}
    \models \el(w)$ and, by transitivity of model-based entailment, $\Tmc\models \el(w)$.
    \qed
\end{proof}

We now show that Definition~\ref{def:measure} is more general than the similar definition of $\Psi$-measures in~\cite{DBLP:conf/dlog/AlrabbaaBBKK20}, and in particular now also covers the measure \emph{depth}.

\begin{definition}[\!\!\cite{DBLP:conf/dlog/AlrabbaaBBKK20}]
A measure \m is a \emph{$\Psi$-measure} if, for every $\p\in \mathrm{P}_{\L}$,
\begin{enumerate}[leftmargin=2.2em]
\item[\pol] $\m(\p)$ is computable in \textbf{\underline{p}}olynomial time in the size of~$\p$,
\item[\textbf{\upshape{[SI]}}] every \textbf{\underline{s}}ubproof of a homomorphic \textbf{\underline{i}}mage of~$\p$ weighs no more than~$\p$, \ie $\m(\p'')\le\m(\p)$ for any homomorphism $h\colon \p\to \p'$ and $\p''\subseteq h(\p)$ such that $\p''\in\mathrm{P}_{\L}$.
\end{enumerate}
\end{definition}

\begin{restatable}{lemma}{phiandpsi}\label{lem:phi-and-psi}
    For proofs according to Definition~\ref{def:proof}, every $\mathrm{\Psi}$-measure (as introduced in~\cite{DBLP:conf/dlog/AlrabbaaBBKK20}) is a \pis-measure, but not vice versa.
\end{restatable}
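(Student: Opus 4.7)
The plan splits along the two halves of the lemma. For the inclusion $\Psi\Rightarrow\pis$, I would simply compare definitions. Both demand \pol, so only \si must be derived from \textbf{[SI]}. Given any homomorphism $h\colon\p\to H$ and any minimal subproof $\p'$ of $h(\p)$ with the same sink, $\p'$ is itself a subproof of the homomorphic image~$h(\p)$, and \textbf{[SI]}---which quantifies over \emph{every} such subproof---directly delivers $\m(\p')\le\m(\p)$. In essence, \textbf{[SI]} is strictly stronger than \si because \si constrains only the weight-minimal subproof with the prescribed sink.

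For the separation, the natural candidate is $\mdepth$, which is already a monotone recursive $\pis$-measure by Lemma~\ref{lem:table-psi}. The idea is to exhibit a proof $\p$ together with a homomorphism~$h$ whose image $h(\p)$ is itself a proof of strictly greater depth; then taking $\p''\mathrel{:=}h(\p)$ as a subproof of $h(\p)$ contradicts \textbf{[SI]}. Concretely, I would take $\p$ with leaves $a_1,a_2$, intermediate vertices $b_1,b_2$, and sink~$c$, equipped with the hyperedges $(\{a_1\},b_1)$, $(\{a_2\},b_2)$, and $(\{b_1,b_2\},c)$, so that $\mdepth(\p)=2$. The only label requirement for the homomorphism is $\el(b_1)=\el(a_2)$, which is easy to satisfy, e.g.\ by letting every vertex carry a single self-entailing axiom $\alpha$ in a theory $\Tmc=\{\alpha\}$. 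Now let $h$ identify $b_1$ with $a_2$. Then the image $h(\p)$ features the chain $h(a_1)\to h(b_1)\to h(b_2)$ together with the hyperedge $(\{h(b_1),h(b_2)\},h(c))$, producing a path of length~$3$ and hence $\mdepth(h(\p))=3$.

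The main remaining step, and the only real obstacle, is to verify that $h(\p)$ still satisfies the full list of requirements in Definition~\ref{def:proof}: a unique sink $h(c)$, acyclicity (which holds because the identification fuses a leaf with an intermediate non-ancestor vertex), at most one incoming edge per vertex (no fused image edges collide at a common target), groundedness (the remaining leaf $h(a_1)$ is labelled by a theory sentence), and soundness of each image edge (inherited from~$\p$). Once these are checked, $\p''\mathrel{:=}h(\p)$ is a subproof of~$h(\p)$ with $\mdepth(\p'')=3>2=\mdepth(\p)$, so \textbf{[SI]} fails while Lemma~\ref{lem:table-psi} still guarantees \si. Hence $\mdepth$ is a $\pis$-measure that is not a $\Psi$-measure, completing the proof.
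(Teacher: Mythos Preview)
Your argument is correct and follows the same approach as the paper: the forward implication is the trivial comparison of \textbf{[SI]} with \si, and \mdepth serves as the separating example. The paper merely cites Lemma~8 of~\cite{DBLP:conf/dlog/AlrabbaaBBKK20} for the failure of \textbf{[SI]}, so your explicit five-vertex construction (identifying $b_1$ with $a_2$ to raise the depth from~$2$ to~$3$) is a self-contained substitute for that external reference.
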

\begin{proof}
Trivially follows since \si requires that only minimal subproofs of the homomorphic image weighs no more than \p. 
However, in contrast to~\cite{DBLP:conf/dlog/AlrabbaaBBKK20}, in this paper we require every vertex in a proof in Definition~\ref{def:proof} to have at most one incoming edge. Thus, rigorously speaking, measures in this paper may be undefined for some proof hypergraphs from the paper~\cite{DBLP:conf/dlog/AlrabbaaBBKK20}.

Moreover, \emph{depth} is a \pis-measure (see Lemma~\ref{lem:table-psi}) but not a $\Psi$-measure (see Lemma~8 in~\cite{DBLP:conf/dlog/AlrabbaaBBKK20}).
\qed
\end{proof}

For the following proof, we define $\p^{-v}=(V^{-v},E^{-v},\el^{-v})$ as the largest subgraph of~$\p$ such that
\begin{itemize}
\item $v \in \mathit{leaf}(\p^{-v})$,
\item $\mathit{leaf}(\p^{-v})\subseteq\mathit{leaf}(\p)$, and
\item $\mathit{sink}(\p^{-v})=\mathit{sink}(\p)$.
\end{itemize}

Intuitively, we obtain $\p^{-v}$ by removing the proof of $v$, \ie$\p_v$, from \p. Therefore, for
every $w\in V_v$ where $w\neq v$, and every $(S,d) \in E_v$ where $w \in S$, if all paths
$P$
in \p from $w$ to $\mathit{sink}(P)$ go
through $v$, then $w \not\in V^{-v}$ and $(S,d)\not\in E^{-v}$, otherwise, $w \in V^{-v}$ and
$(S,d)\in E^{-v}$.
$\p^{-v}$ need not be a proof \wrt~\Tmc since $v$ is now a leaf, but $\ell(v)$ may not be a 
sentence from~\Tmc.

\LemAdmHyperproofInside*
\begin{proof}

    Let \p be such a proof with associated homomorphism $h\colon \p\to\R(\Tmc,\eta)$.

    First, we show that there is a subproof for $\Tmc\models\eta$  in the homomorphic image.
    If $h(\p)$ is acyclic and every vertex has at most one incoming edge, then we already 
    found 
    one subproof.
    Since \p has a unique sink~$v_\eta$, it must be mapped to a unique sink $h(v_\eta)$
    in~$h(\p)$, and thus $h(\p)$ is the desired subproof of~$\R(\Tmc,\eta)$.

    If $h(\p)$ is not acyclic or there is a vertex with more than one incoming edge, our
    goal is to
    find another admissible proof~$\p^*$ \wrt
    $\R(\Tmc,\eta)$ that uses a subset of the vertices of~\p such that $h(\p^*)\subseteq 
    h(\p)$
    is acyclic with only one incoming edge for any vertex. %
    For this purpose, first consider an arbitrary cycle in~$h(\p)$, which must be due to two 
    vertices
    $v,v'$ in~\p such that $h(v)=h(v')$ and there is a path between $v$ and~$v'$ (or due 
    to
    multiple such pairs of vertices).
    Since \p is acyclic, we can assume that there is a path from~$v$ to~$v'$, but no path
    from~$v'$ to~$v$. We now consider the two subproofs~$\p_v$ and~$\p_{v'}$. As 
    there is a
    path from $v$ to $v'$, we have $\p_v\subset\p_{v'}$. Since $h(v)=h(v')$, both vertices 
    are
    labeled with the same sentence. The idea of the following construction is to remove 
    $\p_{v'}$
    from~\p and replace it with~$\p_v$, which effectively removes all paths from~$v$ 
    to~$v'$.
    
    More formally, we first consider the hypergraph $H=\p^{-v'}\cup\p_v$ and then, in the
    hyperedges $(S,d)$ in~$H$ that still contain $v'\in S$, we replace $v'$ by~$v$, 
    effectively
    merging the two vertices, remove $v'$ from the set of vertices, and thus obtain a 
    hypergraph
    $\p'$.
    If there was no such hyperedge, then $v'$ was the sink of~\p, \ie $\el(v')=\eta$, and 
    $v$ will
    now be the new sink in~$\p'$ with $\el(v)=\el(v')=\eta$.

    For a vertex $w$ in $h(\p)$ with more than one incoming edge, again there
    must be
    two vertices $v,v'$ in \p \st $h(v)=h(v')=w$. We can thus apply the same procedure as
    above.
    However, since $\p_v\subset\p_{v'}$ may not hold, it does not matter which of the subproofs $\p_v,\p_{v'}$ is replaced by the other.

    We now show that $\p'$ is also an admissible proof \wrt $\R(\Tmc,\eta)$.
    Our construction does not produce new leaves, and hence $\p'$ is still grounded.
    Clearly, all remaining edges are sound since they were already sound in~\p.
    Moreover, $\p'$ is acyclic and every vertex has only one incoming edge since all edges 
    and 
    cycles in~$\p'$ can be traced back to paths in~\p that
    involve both~$v$ and~$v'$; but we have assumed that there are no paths from~$v'$
    to~$v$, and have destroyed all paths from~$v$ to~$v'$.
    As argued above, we have also kept the property that there is exactly one sink, which is
    labeled with~$\eta$.
    Observe that $h$ is also a homomorphism from~$\p'$ to $\R(\Tmc,\eta)$ (when 
    restricted to
    the vertices of~$\p'$), because $h(v)=h(v')$, and moreover $h(\p')\subseteq h(\p)$.

    This means that, after finitely many such operations, we can obtain from~\p the desired
    proof~$\p^*$ such that $h(\p^*)\subseteq h(\p)$ is acyclic with every vertex having at 
    most 
    one incoming edge.
    Since $h(\p^*)$ also has a unique sink labeled by~$\eta$, it is a subproof with sink
    $h(v_\eta)$ in
    $\R(\Tmc,\eta)$.

    Second, consider the set $\mathbf{Q}$ of all possible subproofs with sink $h(v_\eta)$ in
    $\R(\Tmc,\eta)$. As shown above, $\mathbf{Q}$ is non-empty. Thus, if the proof
    $h(\p^*)$ obtained in the previous step is minimal \wrt \m, then
    $\m(h(\p^*))\le\m(\p)\le q$
    by~\si in Definition~\ref{def:measure}. Otherwise, there is another subproof $\q 
    \in\mathbf{Q}$
    with sink $h(v_\eta)$ in $\R(\Tmc,\eta)$, \st $\m(\q) < \m(h(\p^*))$ and its weight is
    minimal \wrt
    \m among
    $\mathbf{Q}$. Then, again, $\m(\q)\le\m(\p)\le q$ by~\si in 
    Definition~\ref{def:measure}.
    \qed
\end{proof}

An interesting property of recursive measures is that they are invariant under unraveling (see Definition~\ref{def:unraveling}).

\begin{restatable}{lemma}{proofMeasureEqualItsUnraveling}\label{lem:recursive-unraveling}
    Let \m be a recursive \pis-measure, \p be a proof for $\Tmc\models\eta$ and $\p_T$ its unraveling into a tree (starting at the sink). Then $\m(\p)=\m(\p_T)$.
\end{restatable}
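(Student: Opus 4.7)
The plan is to prove the equality by a bottom-up induction on the tree structure, exploiting the recursive definition of $\m$ directly. Since every vertex in a proof has at most one incoming edge (Definition~\ref{def:proof}), for every vertex $w$ of $\p$ the subproof $\p_w$ with sink $w$ is uniquely determined, and therefore $\mu(w) := \m(\p_w)$ is well-defined; analogously, for every vertex $P$ of $\p_T$ (which is itself a path in $\p$ ending at $v_\eta$), set $\mu_T(P) := \m((\p_T)_P)$. The goal reduces to the claim that $\mu_T(P) = \mu(h_T(P))$ for every $P \in V_T$, where $h_T$ is the canonical homomorphism from the unraveling to~$\p$ that sends each path to its starting vertex. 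Applying this to the root of $\p_T$ (which is $v_\eta$ itself and is mapped to $v_\eta$) then yields $\m(\p_T) = \m(\p)$.

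The base case is straightforward: if $w = h_T(P)$ is a leaf of $\p$, then there is no edge of the form $(S',w)$ in $\p$ and consequently no path of the form $(d,(S',w),w)\cdot P$ in $\p_T$, so $P$ is a leaf of $\p_T$; by Definition~\ref{def:unraveling} its label agrees with $\ell(w)$, and hence both $\mu_T(P)$ and $\mu(w)$ equal $\leaf_\m(\ell(w))$. For the inductive step, suppose $w = h_T(P)$ has the unique incoming edge $(S,w) \in E$ with $S = \{d_1,\dots,d_n\}$. By the construction of the unraveling, $P$ has a unique incoming edge in $\p_T$, namely $(\{P_1,\dots,P_n\},P)$ with $P_i = (d_i,(S,w),w)\cdot P$ (or the analogous form if $w = v_\eta$); each $P_i$ starts at $d_i$, so $\ell_T(P_i) = \ell(d_i)$ and therefore the edge labels $\ell_T(\{P_1,\dots,P_n\},P)$ and $\ell(S,w)$ coincide. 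By the induction hypothesis, $\mu_T(P_i) = \mu(d_i)$ for every $i$, and the recursive clause of Definition~\ref{def:recursive} gives
\[
\mu_T(P) = \edge_\m\bigl(\ell(S,w),\{\mu(d_1),\dots,\mu(d_n)\}\bigr) = \mu(w),
\]
which completes the induction.

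The only subtlety worth attention is that the induction must be organised on the well-founded structure of the tree $\p_T$ rather than on $\p$ itself, because a single vertex $w$ of $\p$ may correspond to many vertices of $\p_T$. However, since every path in $\p$ is finite (Lemma~\ref{lem:proof-properties}) and each $P_i$ is strictly longer than $P$ in $\p_T$, the subtree of $\p_T$ rooted at $P$ is finite and the induction is well-founded. The multiset notation used in Definition~\ref{def:recursive} also absorbs harmlessly the fact that different $P_i$ may, in principle, carry equal recursive values. No use of the monotonicity or polynomial-time clauses is needed, so the statement holds for every recursive \pis-measure.
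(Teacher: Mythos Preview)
Your proof is correct and follows essentially the same approach as the paper: an induction that unfolds the recursive definition of~\m along the structure of the proof, using that the unique incoming edge at each vertex determines matching edge labels in~$\p$ and~$\p_T$. The paper phrases the induction on the depth of~$\p$ and invokes the observation that $\p_T$ decomposes into the unravelings $(\p_w)_T$ for $w\in S$ glued along the final edge, whereas you formulate it pointwise over the vertices of~$\p_T$ via the homomorphism~$h_T$; these are equivalent organisations of the same argument.
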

\begin{proof}
    We show this by induction on the depth of~\p.
    If \p contains only one vertex, then $\p_T=\p$, and thus the claim is trivial.
    If the longest path in~\p has length~$n$, assume that the claim holds for all proofs of 
    depth 
    at most~$n-1$.
    Consider the unique hyperedge $(S,v)\in\p$ that leads to the sink~$v$ of~\p.
    Then $\p_T$ is isomorphic to the union of the unravelings~$\p_{w,T}$ of all~$\p_w$ 
    with 
    $w\in S$, together with the hypergraph that contains only the edge~$(S,v)$ (if we 
    identify 
    the paths $(w,(S,v),v)$ with the vertices~$w$).
    By induction, $\m(\p_w)=\m(\p_{w,T})$ since each $\p_w$ is of depth at most~$n-1$.
    We obtain that
    \begin{align*}
    \m(\p)
    &= \edge_\m\big(\ell(S,v),\{\m(\p_w)\mid w\in S\}\big) \\
    &= \edge_\m\big(\ell(S,v),\{\m(\p_{w,T})\mid w\in S\}\big) \\
    &= \m(\p_T)
    \end{align*}
    since \m is recursive.
    \qed
\end{proof}

\LemTableIsPSI*
\begin{proof}
    Both measures are monotone recursive by definition and can be computed in polynomial time in the size of the input proof.
    
    For \emph{tree size}, we consider a homomorphism $h\colon\p\to H$ and a vertex~$w$ in~$\p$. By the procedure described in the proof of Lemma~\ref{lem:adm-hyperproof-inside}, there exists a proof $\p^*_w$ with edges from $\p$ (modulo renamed vertices) \st $h(\p^*_w) \subseteq h(\p)$ is a proof with sink $h(w)$. It is not hard to see that, by construction, $\mtree(\p^*_w) \leq \mtree(\p)$, since we replace subproofs $\p_{v'}$ with $\p_v$, where $\p_v \subset \p_{v'}$ and $\mtree(\p_v) < \mtree(\p_{v'})$. Since $h(\p^*_w)$ is a proof, $\mtree(h(\p^*_w))$ is defined. The property of any proof vertex having only one incoming edge guarantees that $h(\p^*_w)$ and $\p^*_w$ are isomorphic. Since homomorphisms preserve edges, $\mtree(\p^*_w)=\mtree(h(\p^*_w))$.
    Thus, for every vertex~$w$ in~\p, there is a proof with sink~$h(w)$ in~$h(\p)$ of tree size no greater than $\mtree(\p)$. Trivially, every minimal (\wrt \mtree) subproof with sink~$h(w)$ weighs no more than $\mtree(h(\p^*_w))$.
    Every vertex in $h(\p)$ has a pre-image in $\p$ and, therefore, \si holds for \emph{tree size}.

    For \emph{depth}, we can use a similar argument. The process of replacing subproofs $\p_{v'}$ with their smaller alternatives $\p_v \subset \p_{v'}$ also results in a non-increasing depth for $\p^*_w$. As a consequence, $\mdepth(h(\p^*_w)) \leq \mdepth(\p)$ for $h(\p^*_w)\subseteq h(\p)$.
    \qed
\end{proof}

\LemDijkstra*
\begin{proof}
    We can show the following facts about this algorithm.
    \begin{enumerate}[label=(\Roman*),leftmargin=*]
        \item\label{p:sound}%
        \emph{Whenever $\p(v)$ is defined, then it is a proof for~$\ell(v)$ contained in
            $\R(\Tmc,\eta)$.}
        
        \smallskip
        We prove this by induction on the order in which the hypergraphs~$\p(v)$ are 
        constructed
        by Algorithm~\ref{alg:dijkstra}.
        The ones in Line~\ref{l:tbox} consist of a single leaf~$v$, which is labeled by a theory sentence,
        and hence are sound, grounded, acyclic, and have the single sink~$v$.
        Similarly, $\p(v)$ in Line~\ref{l:tautology} is always a proof since it consists of a 
        single
        edge from $\R(\Tmc,\eta)$, has no leaves, and has~$v$ as the only sink.
        
        Consider now the hypergraph~\p constructed in Line~\ref{l:new-proof} as a 
        possible
        candidate for $\p(v)$ (where $v=d$).
        At this point, all $\p(s)$, $s\in S$, are already defined since the counter~$k(e)$ can 
        only
        reach $|S|$ if each $s\in S$ has already been chosen in Line~\ref{l:choose}, and 
        thus
        $\p(s)$ must have been defined.
        Hence, by induction, each $\p(s)$ is a proof for~$\ell(s)$ contained in 
        $\R(\Tmc,\eta)$,
        and,
        because we assume that $\R(\Tmc,\eta)$ contains no two vertices with the same 
        label,
        must
        have $s$ as sink.
        This shows that the hypergraph~\p constructed in Line~\ref{l:new-proof} is sound,
        grounded,
        and has a single sink, namely~$v$.
        Finally, $\p(v)$ is only updated to~$\p$ in Line~\ref{l:update} if $\p$ is acyclic and
        therefore it is a proof.

        \item\label{p:monotone} %
        \emph{If vertex~$v$ is chosen before vertex~$w$ in Line~\ref{l:choose}, then
            $\m(\p(v))\le\m(\p(w))$.}
        
        \smallskip
        We show that after choosing~$v$ in Line~\ref{l:choose} the algorithm cannot 
        produce a
        new proof~\p in Line~\ref{l:new-proof} with $\m(\p)<\m(\p(v))$, and thus the 
        smallest
        weight $\min\{\m(\p(w))\mid w\in Q\}$ can never decrease.
        Consider the proof~\p from Line~\ref{l:acyc}.
        Since $v\in S$, we have $\p(v)\subset\p$, and therefore $\m(\p(v))\le\m(\p)$ 
        by~\si.

        \item\label{p:terminating} %
        \emph{Algorithm~\ref{alg:dijkstra} terminates in polynomial time.}
        
        \smallskip
        Item~\ref{p:monotone} implies that each vertex $v\in V$ can be removed from~$Q$ 
        at
        most once:
        in order for $v$ to be added again to $Q$ in Line~\ref{l:update}, there would need 
        to exist
        a
        proof other than~$\p(v)$ with the same sink~$v$ but a smaller weight, but
        according to \ref{p:monotone}, after choosing~$v$ in Line~\ref{l:choose}, the
        algorithm does not construct any proofs with a weight smaller than $\m(\p(v))$ (for 
        any
        sink).
        Therefore, during the complete run of the algorithm, each edge $e=(S,d)\in E$ will 
        be used
        at most once in Line~\ref{l:new-proof}.
        Moreover, all primitive operations in the algorithm can be done in polynomial time, 
        such as
        checking acyclicity of hypergraphs in Line~\ref{l:acyc} or finding the minimal value
        $\m(\p(v))$ in Line~\ref{l:choose}.
        It follows that Algorithm~\ref{alg:dijkstra} terminates in time polynomial in the size
        of~$\R(\Tmc,\eta)$.
        
        \item\label{p:complete} %
        \emph{Every vertex~$v\in V$ that is the sink of a proof~\p contained in 
        $\R(\Tmc,\eta)$ is
            added to~$Q$ at some point.}
        
        \smallskip
        We prove this by induction on the structure of~\p.
        If \p contains only~$v$, then either $\ell(v)\in\Tmc$, and hence $v$ is added 
        to~$Q$ in
        Line~\ref{l:tbox}, or otherwise there is an edge $(\emptyset,v)$ in~\p (and hence 
        in~$E$),
        in which case $v$ is added to~$Q$ in Line~\ref{l:tautology}.
        
        If \p has more than one vertex, then it must contain at least one edge $e=(S,v)\in E$, 
        where
        each $s\in S$ is the sink of a subproof of~\p in $\R(\Tmc,\eta)$.
        By induction we know that each $s\in S$ is added to~$Q$ at some point during the
        algorithm.
        By Item~\ref{p:terminating}, they must also be removed from~$Q$ at some point
        afterwards, and hence eventually $k(e)$ reaches~$|S|$ in Line~\ref{l:counter}.
        If $\p(v)$ was already defined at this point, then $v$ had already been added to~$Q$
        earlier. Otherwise, $v$ is now added to~$Q$ in Line~\ref{l:update}.
        
        \item\label{p:optimal} %
        \emph{When Algorithm~\ref{alg:dijkstra} terminates and $\p(v)$ is defined, then
            $\m(\p(v))$ is minimal among
            all proofs for $\ell(v)$ contained in $\R(\Tmc,\eta)$.}

        \smallskip
        By~\ref{p:sound}, $\p(v)$ is a proof of this form.
        Assume to the contrary that there is a proof~\p for~$\ell(v)$ contained in 
        $\R(\Tmc,\eta)$
        such that $\m(\p)<\m(\p(v))$.
        Then~\p and $\p(v)$ must both have the sink~$v$, because we assume that
        $\R(\Tmc,\eta)$
        contains no two vertices with the same label.
        Assume moreover that
        \begin{enumerate}[label=\roman*)]
            \item\label{i:optimal} $\p$ is an optimal proof for~$\ell(v)$ in $\R(\Tmc,\eta)$, 
            that is,
            $\m(\p)
            \leq\m(\p')$ for every other proof $\p'$ for~$\ell(v)$ in $\R(\Tmc,\eta)$ (\cf
            Corollary~\ref{cor:existence}), and
            \item\label{i:smallest} among all other vertices $v'\in V$ and all proofs $\p'$
            for~$\ell(v')$ in $\R(\Tmc,\eta)$ such that $\m(\p')<\m(\p(v'))$, we
            also have $\m(\p)\le\m(\p')$ and whenever $\m(\p)=\m(\p')$, then $|\p|\le|\p'|$.
        \end{enumerate}
        Consider the unique last inference step~$(S,v)$ in~\p.
        We show that, for every vertex $w\in S$, an optimal proof was assigned to $\p(w)$ 
        before 
        $v$ was chosen in Line~\ref{l:choose}.
        
        First, $\p_w$ is a proof of $\ell(w)$ contained in $\R(\Tmc,\eta)$, and hence by
        \ref{p:complete} $\p(w)$ must be defined when Algorithm~\ref{alg:dijkstra} 
        terminates.
        
        Next we show that $\m(\p(w))\le\m(\p_w)$.
        Assume to the contrary that $\m(\p_w)<\m(\p(w))$, \ie $w$ and $\p_w$ satisfy the
        precondition in Assumption~\ref{i:smallest}, and thus $\m(\p)\le\m(\p_w)$ must 
        hold.
        However, by~\si, we have $\m(\p_w)\le\m(\p)$ (observe that $\m(\p_w)$ must be minimal among the subproofs of~$\p_w$ with sink~$w$ since otherwise $\m(\p)$ would not be minimal, because \m is monotone recursive).
        Thus, Assumption~\ref{i:smallest} also
        yields that $|\p|\le|\p_w|$, which contradicts the fact that $\p_w$ is a subproof of~\p.
        
        We obtain that
        \[ \m(\p(w))\le\m(\p_w)\le\m(\p)<\m(\p(v)), \]
        which by \ref{p:monotone} means that $w$ must have been chosen (in
        Line~\ref{l:choose})
        before~$v$.
        
        To summarize, for every vertex $w\in S$, we know that an optimal proof 
        for~$\ell(w)$ with
        weight~$\m(\p_w)$ has already been assigned to $\p(w)$ before $v$ is chosen in
        Line~\ref{l:choose}, and moreover each $w\in S$ was chosen before~$v$. But then, 
        for
        one of
        these vertices~$w$ (the last one to be processed), a proof $\p'$ is constructed from 
        the
        subproofs $\p(w)$ and the edge $(S,v)$ in Line~\ref{l:new-proof}.
        Moreover, since $\m(\p(w))\le\m(\p_w)$ holds for all $w\in S$ and \m is a 
        monotone 
        recursive measure, we obtain
        \begin{align*}
        \m(p')
        &= \edge_\m\big(\ell(S,v),\{\m(\p(w))\mid w\in S\}\big) \\
        &\le \edge_\m\big(\ell(S,v),\{\m(\p_w)\mid w\in S\}\big) \\
        &= \m(\p).
        \end{align*}
        Since $\p'$ was constructed as a candidate for~$\p(v)$ by the algorithm, we further 
        know 
        that $\m(\p(v))\le\m(\p')\le\m(\p)$, which contradicts our initial assumption that 
        $\m(\p)<\m(\p(v))$.
        We obtain that $\p(v)$ must be optimal.
    \end{enumerate}
    
    Since by Lemma~\ref{lem:adm-hyperproof-inside} the derivation structure 
    $\R(\Tmc,\eta)$
    contains an optimal proof for $\Tmc\models\eta$,
    Items~\ref{p:terminating}--\ref{p:optimal}
    show that Algorithm~\ref{alg:dijkstra} returns such a proof in Line~\ref{l:return}.
    \qed
\end{proof}

\treePSapceUpperbound*
\begin{proof}[continued]
    By our assumption that all sentences in $\R(\Tmc,\eta)$ are of polynomial size and due to the binary encoding of numbers, each tuple in~$S$ takes only polynomial space. It thus remains to verify the properties \ref{pspace-root}--\ref{pspace-sum} to show that $S$ is polynomially bounded.

    We proceed by induction on the algorithm
    steps:
    for the original set, $\tup{\eta,q}$ is the only child of the root. In each step where we 
    replace
    $\tup{\eta',q'}$ with $\tup{\eta_1,q_1}$, $\ldots$, $\tup{\eta_m,q_m}$, we remove 
    from the
    current
    tree the node labeled with $\tup{\eta',q'}$, and if $m=1$, we replace it with the new 
    node,
    and if
    $m>1$, we put the new nodes under the leaf labeled with the now smallest value, 
    or under $\epsilon$ if there is no other leaf. It is 
    clear
    that the
    resulting structure must always satisfy~\ref{pspace-root}, \ref{pspace-labels} 
    and~\ref{pspace-sum}. It satisfies~\ref{pspace-two} because we replace a node in 
    the tree 
    if the
    number of new nodes is~$1$. It satisfies~\ref{pspace-p} because we only replace 
    nodes or 
    add $m>1$ nodes to a leaf,
    and we
    have $m\leq p$. For~\ref{pspace-half}, we note that because of~\ref{pspace-two} 
    and~\ref{pspace-sum}, for each node labeled
    $\tup{\eta',q'}$, its child $\tup{\eta'',q''}$ with minimal~$q''$ must always satisfy
    $q''<\frac{q'}{2}$, and we always put new nodes only under the node with the smallest
    associated number. Now, as a consequence of~\ref{pspace-half} and because every number 
    in~$S$ 
    is bounded by~$q$, we
    obtain that the subtree containing all inner nodes is a path
    and of depth $\log_2 q$. Together with~\ref{pspace-p}, this implies that the
    tree has at most $p\cdot \log_2 q$ nodes, and with~\ref{pspace-labels}, we obtain 
    that
    $\left|S\right|\leq p\cdot\log_2 q$ is always satisfied.
    \qed
\end{proof}

\treePSpaceLowerBound*
\begin{proof}[continued]
    It remains to find the upper bound~$q$ on the tree size of a proof for $\Tmc\models\exFont{Start}\sqsubseteq\exFont{Accept}$.
    
    We first note that $T$ has at most $\lvert Q\rvert\cdot\lvert\Gamma\rvert^k$ 
    different
    configurations, and thus an accepting run involves $m\leq \lvert
    Q\rvert\cdot\lvert\Gamma\rvert^k$ steps. Let $\exFont{Conf}_0$, $\ldots$,
    $\exFont{Conf}_m$ denote
    the sequence of conjunctions representing those configurations, where 
    $\exFont{Conf}_0$
    represents
    the initial configuration and $\exFont{Conf}_m$ the final one. For $j\in\{0,\ldots,m-1\}$, 
    the
    transitions from one configuration to the succeeding is encapsulated in an entailment of 
    the
    form $\exFont{Conf}_j\sqsubseteq\exists r.\exFont{Conf}_{j+1}$. Those
    entailments are inferred as follows with the calculus:
    \begin{itemize}
        \item $(2+k+1)+k=2k+3$
        times we have to apply $\mathsf{CR1}$ and then
        $\mathsf{CR2}$
        to obtain
        sentences as in~\eqref{al:transition-function} and~\eqref{al:keep-tape} but with the
        entire
        configuration encoding $\exFont{Conf}_j$ on the left-hand side. For each of the $2+k+1$ 
        sentences corresponding to~\eqref{al:transition-function}, this gives a tree of size 5 (3 times 
        we apply $\mathsf{CR1}$ without a premise to get the sentences $\exFont{Conf}_j\sqsubseteq 
        A$, where $A$ is an atom on the left-hand side of \eqref{al:transition-function}, followed by one 
        application of $\mathsf{CR2}$ with a sentence corresponding 
        to~\eqref{al:transition-function}). For the $k$ sentences in~\eqref{al:keep-tape}, this requires a 
        tree of size 4 (the same argument as before, but now with one atom less on the left-hand side 
        of~\eqref{al:keep-tape}). 
        
        \item The resulting $2k+3$
        sentences are then step-wise combined using $\mathsf{CR4}$ to
        obtain
        the desired entailment with $\exFont{Conf}_{j+1}$ under the role restriction. 
        This generates $2k+1$ intermediate conclusions. Together with the final conclusion
        $\exFont{Conf}_j\sqsubseteq\exists r.\exFont{Conf}_{j+1}$, this makes $2k+2$ additional tree vertices in total.
      \end{itemize}
    Consequently, each inference of $\exFont{Conf}_j\sqsubseteq\exists 
    r.\exFont{Conf}_{j+1}$
    is
    generated by a proof of tree size $(2+k+1)\cdot 5 + k\cdot 4 + (2k+2) =11k+17$. 

    The complete tree proof for $\exFont{Start}\sqsubseteq\exFont{Accept}$ is now obtained by
    first
    generating $\exFont{Conf}_m\sqsubseteq\exFont{Accept}$ for the final configuration. 
    This
    involves 3
    vertices, the first by using $\mathsf{CR1}$ to generate $\exFont{Conf}_m\sqsubseteq 
    S_f$,
    where $f$ is the
    accepting state of the configuration, and then using $\mathsf{CR2}$ with 
    Sentence~\eqref{al:final} to generate
    $\exFont{Conf}_m\sqsubseteq\exFont{Accept}$. From here, we follow the sequence 
    of
    configurations
    backwards, each time inferring from $\exFont{Conf}_j\sqsubseteq\exFont{Accept}$ 
    and
    \eqref{al:accept} the sentence $\exFont{Conf}_j\sqsubseteq\forall r^-.\exFont{Accept}$ 
    (increasing the tree size by~$2$), and
    then using this sentence together with $\exFont{Conf}_{j-1}\sqsubseteq\exists 
    r.\exFont{Conf}_j$ via
    $\mathsf{CR3}$ to get $\exFont{Conf}_{j-1}\sqsubseteq\exFont{Accept}$ (tree size 
    increased by 
    $11k+17$).
    Finally, from 
    $\exFont{Conf}_0\sqsubseteq\exFont{Accept}$ we 
    get
    to
    $\exFont{Start}\sqsubseteq\exFont{Accept}$ using the $2k+3$
    sentences corresponding to~\eqref{al:initial} and $\mathsf{CR2}$. We obtain that the entire tree 
    proof requires 
    at
    most
    \begin{align*}
    1 + (2k+3) + m\cdot(3 + 2 + (11k+17))
    &\leq (m+1)\cdot(11k+ 22)\\
    &\leq (\lvert Q\rvert \cdot \lvert\Gamma\rvert^{p(\lvert w\rvert)}+1)\cdot (17 + 11 p (\lvert w\rvert))
    \end{align*}
    vertices. Note that this number can be encoded using polynomially many bits.\qed
\end{proof}

\fi

\end{document}